\documentclass[11pt,a4paper]{article}
\usepackage[T1]{fontenc}
\usepackage[utf8]{inputenc}
\usepackage{graphicx}
\usepackage{amsmath,amssymb,amsthm,amsfonts}
\usepackage{geometry}
\usepackage{booktabs}
\usepackage{hyperref}

\newtheorem{theorem}{Theorem}
\newtheorem{proposition}{Proposition}
\newtheorem{lemma}{Lemma}
\newtheorem{corollary}{Corollary}
\theoremstyle{definition}
\newtheorem{definition}{Definition}
\newtheorem{remark}{Remark}

\title{\textbf{Asymptotic Signal Subspace Recovery in Softmax Attention Models}}
\author{\textbf{Lan V. Truong}\\
Faculty of Computer Science and Engineering\\
Ho Chi Minh City University of Technology (HCMUT)\\
Vietnam National University Ho Chi Minh City (VNU-HCM), Vietnam\\
\texttt{lantv@hcmut.edu.vn}}

\date{}

\begin{document}

\maketitle

\begin{abstract}
Attention mechanisms have demonstrated remarkable empirical success in identifying relevant information from large collections of tokens, yet the theoretical principles underlying this behavior remain poorly understood. We study a stylized softmax-attention model in which a query vector is learned by stochastic gradient ascent from a collection of informative and nuisance tokens. Exploiting the symmetry of the model, we derive a population objective and characterize the limiting ordinary differential equation governing the learning dynamics. Using tools from stochastic approximation and dynamical systems theory, we establish a rigorous connection between the stochastic learning algorithm and its deterministic limit. Our main result shows that, under suitable high-dimensional scaling assumptions and standard step-size conditions, the learned query converges almost surely to the one-dimensional signal subspace spanned by the latent informative direction. Equivalently, the query asymptotically recovers the latent signal up to the intrinsic sign ambiguity. These results provide a rigorous theoretical foundation for understanding attention mechanisms as signal extraction procedures in high-dimensional noisy environments and offer a dynamical-systems perspective on how attention discovers relevant information in the presence of substantial noise.
\end{abstract}

\tableofcontents

\newpage

\section{Introduction}

The Transformer architecture has emerged as the dominant paradigm in modern machine learning, serving as the foundation of large language models, vision transformers, and numerous multimodal systems. At the heart of the Transformer lies the attention mechanism, which dynamically assigns weights to input tokens and aggregates information according to their relevance. Despite its remarkable empirical success, the theoretical principles underlying the behavior of attention remain only partially understood.

Existing theoretical studies of attention have primarily focused on its expressive power, optimization landscape, approximation properties, or statistical generalization behavior. In most of these analyses, the query, key, and value representations are assumed to be fixed or already learned. Consequently, relatively little is known about the fundamental mechanism by which attention identifies relevant information in a collection of noisy tokens. In particular, a basic scientific question remains largely unanswered:

\begin{quote}
Can attention itself discover a hidden signal embedded among many noisy tokens, and if so, what mathematical mechanism drives this discovery process?
\end{quote}

Recent theoretical works have sought to explain how attention mechanisms identify relevant information in large collections of tokens. Existing analyses have primarily focused on the expressive power of self-attention and transformers \cite{yun2020transformers,likhosherstov2021expressive,perez2019turing}, their interpretation as associative memory and retrieval mechanisms \cite{ramsauer2021hopfield}, and their ability to perform in-context learning \cite{garg2022can,vonoswald2023transformers,Cao2026TransformersLR}. More closely related to our setting, Barnfield et al.~\cite{barnfield2025sparse} studied a sparse-token attention model and showed that gradient-based training induces nontrivial alignment between the query vector and a hidden signal direction. In contrast, we analyze the full stochastic approximation dynamics of the query update and establish almost sure convergence to the latent signal subspace.

This paper investigates these questions through a sparse-signal detection framework. We consider a collection of tokens
\begin{align}
x_i
=
v_i \theta_d \xi_d + z_i,
\qquad
i=1,\ldots,L_d,
\end{align}
where
\begin{itemize}
\item $\xi_d \in S^{d-1}$ is an unknown signal direction;
\item $\theta_d>0$ denotes the signal strength;
\item $v_i\in\{0,1\}$ indicates whether the $i$-th token is informative;
\item $z_i\sim N(0,I_d)$ is an isotropic Gaussian noise vector.
\end{itemize}

Under this model, only a sparse subset of tokens contains useful information aligned with the latent signal direction, while the remaining tokens are pure noise.

To understand how attention identifies informative tokens, we study the attention map
\begin{align}
f_q(X)
=
\sum_{i=1}^{L_d}
a_i(q,X)x_i,
\end{align}
where
\begin{align}
a_i(q,X)
=
\frac{\exp\!\bigl(\beta\langle q,x_i\rangle\bigr)}
{\sum_{j=1}^{L_d}
\exp\!\bigl(\beta\langle q,x_j\rangle\bigr)}.
\end{align}

Here $q\in S^{d-1}$ denotes the query vector and $\beta>0$ is the inverse-temperature parameter.

The key quantity governing the behavior of attention is the alignment
\begin{align}
\rho
=
\langle q,\xi_d\rangle.
\end{align}

When $\rho$ is positive, informative tokens tend to produce larger logits than non-informative tokens. Consequently, they receive larger attention weights and contribute more strongly to the attention output. This creates a positive-feedback mechanism: improved alignment increases the attention placed on informative tokens, which in turn further reinforces alignment with the latent signal direction.

Our analysis formalizes this intuition and establishes a connection between self-attention and sparse-signal detection. Exploiting the rotational symmetry of the model, we show that the optimization landscape can be reduced to a one-dimensional problem governed by the alignment parameter $\rho$. This reduction enables a detailed study of the geometry of the objective function and the resulting alignment dynamics.

Our main result shows that this positive-feedback mechanism is sufficiently strong to recover the latent signal direction. Under suitable high-dimensional scaling assumptions and standard stochastic approximation conditions, the learned query vector converges almost surely to the signal subspace generated by the latent direction $\xi_d$. Consequently, attention-based optimization asymptotically separates informative tokens from nuisance tokens and recovers the underlying signal up to the unavoidable sign ambiguity.

More broadly, our work suggests a new interpretation of attention. Rather than acting solely as an information aggregation operator, attention can also be viewed as a statistical procedure for discovering latent structure hidden within high-dimensional data. This perspective establishes a bridge between Transformer architectures, high-dimensional probability, sparse-signal detection, and stochastic approximation theory.

The main contributions of this work are summarized as follows.

\begin{enumerate}

\item \textbf{A tractable model for attention-based signal recovery.}

We introduce a stylized softmax-attention model consisting of informative and nuisance tokens and study the learning dynamics of a query vector under stochastic gradient ascent. The model captures the essential signal-selection mechanism of attention while remaining amenable to rigorous mathematical analysis.

\item \textbf{Characterization of the limiting attention dynamics.}

We derive the population objective, the associated population vector field, and the corresponding limiting ordinary differential equation governing the asymptotic mean-field evolution of the query vector.

\item \textbf{A rigorous stochastic approximation analysis.}

Using the ODE method and asymptotic pseudo-trajectory theory, we establish that the stochastic query-learning recursion tracks the solutions of the limiting dynamical system, thereby connecting the stochastic optimization algorithm with its deterministic counterpart.

\item \textbf{Global convergence and signal subspace recovery.}

Our main theorem establishes that, under suitable scaling assumptions and standard stochastic approximation conditions,
\begin{align}
\lim_{k\to\infty}
\inf_{q^*\in\{\pm\xi_d\}}
\|q_k-q^*\|
=
0,
\qquad \mathrm{a.s.}
\end{align}

Consequently, the attention-learning dynamics recover the latent informative direction up to the intrinsic sign ambiguity and provide a rigorous explanation for attention-based relevance discovery in high-dimensional noisy environments.

\end{enumerate}

\section{Related Work}

\subsection{Attention Mechanisms and Transformer Theory}

Attention mechanisms were first introduced in neural machine translation
\cite{bahdanau2015neural} and later became the central component of transformer architectures
\cite{vaswani2017attention}. Their success has motivated extensive theoretical investigations into the expressive power and computational capabilities of attention-based models
\cite{yun2020transformers,perez2019turing,likhosherstov2021expressive}.
Recent works have also interpreted attention as a form of associative memory and information retrieval
\cite{ramsauer2021hopfield}, and have analyzed its role in in-context learning  
\cite{garg2022can,vonoswald2023transformers, Cao2026TransformersLR}.
In contrast, our work studies the optimization dynamics of a softmax-attention objective and asks whether gradient-based learning can recover an underlying latent signal direction from a mixture of informative and nuisance tokens.

\subsection{Stochastic Approximation and Dynamical Systems}

Our analysis is rooted in the classical stochastic approximation framework initiated by Robbins and Monro \cite{robbins1951stochastic}. The asymptotic behavior of stochastic recursive algorithms is commonly studied through their associated limiting ordinary differential equations (ODEs), an approach developed by Ljung \cite{ljung1977analysis}, Kushner and Clark \cite{kushner1978}, Kushner and Yin \cite{kushner2003stochastic}, and Borkar \cite{borkar2023stochastic}. A major advance in this direction is the dynamical-systems framework of Bena\"im \cite{benaim1996dynamical,benaim1999dynamics}, which interprets stochastic approximation trajectories as asymptotic pseudo-trajectories of deterministic flows and characterizes their limit sets through chain-recurrence theory. Extensions to differential inclusions were subsequently developed in \cite{benaim2005stochastic}. We employ these tools to analyze the long-term behavior of the proposed attention-learning dynamics and establish convergence to the latent signal subspace.

\subsection{Latent Signal Recovery in High Dimensions}

Recovering latent low-dimensional structure from high-dimensional observations is a central problem in statistics and machine learning. Classical examples include principal component analysis \cite{jolliffe2002principal,johnstone2001distribution}, factor models \cite{bai2003inferential}, spectral estimation and matrix factorization \cite{udell2016generalized}, and tensor decomposition \cite{kolda2009tensor,anandkumar2014tensor}. In these settings, recovery is typically achieved only up to sign or rotational ambiguity. Our result exhibits a similar phenomenon: the learned query vector converges to the latent signal direction up to sign. Unlike classical spectral approaches, however, the recovery mechanism studied here arises from softmax-attention optimization and gradient-based learning dynamics rather than eigendecomposition or moment-based estimation.

\subsection{Theoretical Understanding of Attention as Signal Extraction}

Recent theoretical works have sought to explain how attention mechanisms identify relevant information in large collections of tokens. Existing analyses have primarily focused on the expressive power of self-attention and transformers \cite{yun2020transformers,likhosherstov2021expressive,perez2019turing}, their interpretation as associative memory and retrieval mechanisms \cite{ramsauer2021hopfield}, and their ability to perform in-context learning \cite{garg2022can, vonoswald2023transformers}. More closely related to our setting, Barnfield et al.~\cite{barnfield2025sparse} studied a sparse-token attention model and showed that gradient-based training induces nontrivial alignment between the query vector and a hidden signal direction. In contrast, we analyze the full stochastic approximation dynamics of the query update and prove almost sure convergence to the latent signal subspace. To the best of our knowledge, this provides one of the first rigorous asymptotic signal-recovery results for a softmax-attention learning model.

\section{System Architecture and Data Modeling}
Let $d \in \mathbb{N}$ denote the dimension of the embedding space. Let $\xi_{d}$ be a fixed, hidden target directional unit vector residing on the unit hypersphere $S^{d-1} = \{ v \in \mathbb{R}^d : \|v\| = 1 \}$.

For a given context instance, the input data sequence is collected in a token data matrix $X \in \mathbb{R}^{L \times d}$ comprising $L$ discrete token rows $[x_1, x_2, \dots, x_L]^\top$. The individual tokens are structurally generated via a hidden mixture distribution:
\begin{align}
x_{i} = v_{i}\theta_{d}\xi_{d} + z_{i}, \quad \forall i \in \{1, 2, \dots, L\}
\end{align}
where the constituent variables are defined as follows:
\begin{itemize}
    \item $v_{i} \in \{0,1\}$ is a deterministic partition assignment indicator. The context window contains exactly $R$ informative tokens and $L-R$ pure noise tokens. We define the informative index subset as $\mathcal{I} = \{i : v_i = 1\}$ with $|\mathcal{I}| = R$, and the background noise index subset as $\mathcal{N}_o = \{j : v_j = 0\}$ with $|\mathcal{N}_o| = L - R$.
    \item $\theta_d > 0$ represents the signal scaling magnitude.
    \item $z_{i} \in \mathbb{R}^d$ are independent and identically distributed (i.i.d.) random background noise vectors obeying a standard multivariate isotropic Gaussian distribution:
    \begin{equation}
    z_{i} \sim \mathcal{N}(0, I_{d}).
    \end{equation}
\end{itemize}

\subsection{Attention Weights and Field Expressions}
Let $q \in S^{d-1}$ represent the tracking query vector. The matching attention coefficient mapping $a_{i}(q, X)$ is modeled using the standard softmax function:
\begin{align}
a_{i}(q, X) = \frac{e^{\beta \langle x_{i}, q \rangle}}{\sum_{j=1}^{L} e^{\beta \langle x_{j}, q \rangle}},
\end{align}
where $\beta > 0$ acts as the inverse softmax temperature parameter. By construction, the coefficients are strictly bounded and represent a valid probability distribution across the tokens:
\begin{align}
0 < a_{i}(q, X) < 1, \quad \sum_{i=1}^{L_d} a_{i}(q, X) = 1.
\end{align}
The output representation vector generated by the pooling layer, denoted $f_{q}(X)$, is the convex combination of the input token rows:
\begin{align}
f_{q}(X) = \sum_{i=1}^{L_d} a_{i}(q, X) x_{i}.
\end{align}

\subsection{Unsupervised Objective and Algorithmic Tracking Dynamics}
Because $\xi_d$ is hidden from the algorithm, the system cannot directly evaluate or optimize the inner projection field $\langle f_q(X), \xi_d \rangle$. Instead, we optimize a self-consistent unsupervised energy objective. We maximize the expected squared Euclidean norm of the pooled output:
\begin{align}
J_{d}(q) = \mathbb{E}\left[ \|f_{q}(X)\|^2 \right].
\end{align}
The true population objective function gradient is denoted by $\nabla_q J_d(q)$. At each discrete iteration step $k \in \mathbb{N}_0$, the algorithm draws a fresh, independent input matrix sequence sample $X_{k+1}$ and evaluates the exact empirical stochastic gradient vector $g_t \in \mathbb{R}^d$:
\begin{align}
g_k = \nabla_{q} \|f_{q}(X_{k+1})\|^2 \Big|_{q = q_k}.
\end{align}
The updating tracking query trajectory sequence is generated via a projected spherical stochastic gradient descent loop:
\begin{align}
q_{k+1} = \text{Proj}_{S^{d-1}}(q_k + \eta_k g_k) = \frac{q_k + \eta_k g_k}{\|q_k + \eta_k g_k\|}.
\end{align}
The schedule step sizes $\eta_k$ are configured to satisfy specific conditions linked to the dimensions to balance tracking speed against variance bounds:
\begin{align}
\eta_k = \frac{\gamma_k}{d^2}, \quad \text{where } \gamma_k > 0, \quad \sum_{k=0}^{\infty} \gamma_k = \infty, \quad \sum_{k=0}^{\infty} \gamma_k^{2} < \infty.
\end{align}
\section{Theoretical Analysis}

To analyze the asymptotic tracking capability of the projected stochastic gradient ascent updates, we present our auxiliary framework divided into three key structural steps: analyzing objective landscape geometry, bounding empirical gradient moments, and establishing continuous tracking approximations.

\subsection{Objective Function Geometry \& Landscape}
We first map the geometric landscape and invariant contours of the unsupervised energy objective $J_d(q)$ over the sphere $S^{d-1}$. The first step ensures our population objective is smooth and behaves predictably across the manifold.

\begin{lemma}[Smoothness of the Population Objective] \label{lem:lem1}
The unsupervised population function satisfies $J_{d}(q) \in C^{\infty}(S^{d-1})$.
\end{lemma}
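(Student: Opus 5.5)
We will show that $J_d$ extends to a $C^\infty$ function on all of $\mathbb{R}^d$, and then restrict it to the embedded submanifold $S^{d-1}$. Write
\begin{align}
F(q,X) = \|f_q(X)\|^2 = \sum_{i,j} a_i(q,X)a_j(q,X)\langle x_i,x_j\rangle .
\end{align}
For each fixed $X$, the softmax weights $a_i(q,X)$ are real-analytic in $q\in\mathbb{R}^d$, because the denominator is strictly positive. Hence $F(\cdot,X)$ is $C^\infty$ in $q$. The whole task is therefore to justify differentiating under the expectation $J_d(q)=\mathbb{E}[F(q,X)]$ to all orders. Since $X$ is a finite collection of Gaussian vectors, we work with the joint Gaussian density on $\mathbb{R}^{L\times d}$.

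\textbf{Derivative formula.} First compute the $q$-derivatives of the softmax weights. One has $\nabla_q a_i = \beta a_i\bigl(x_i - \sum_j a_j x_j\bigr)$. Inductively, every multi-index derivative $\partial_q^\alpha a_i$ of order $m=|\alpha|$ is a finite sum of terms of the form
\begin{align}
\beta^m\, \bigl(\text{product of weights } a_{j}\bigr)\times \bigl(\text{monomial of degree } m \text{ in the coordinates of the } x_j\bigr).
\end{align}
Because all weights lie in $(0,1)$, this gives the bound
\begin{align}
|\partial_q^\alpha F(q,X)| \le C_{m,L,\beta}\,\bigl(1+\max_i\|x_i\|\bigr)^{m+2}.
\end{align}
The constant does not depend on $q$. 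This is the key structural fact: all the $q$-dependence is absorbed by the boundedness of the $a_i$.

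\textbf{Dominated convergence.} The right-hand side is a polynomial in Gaussian variables. Each $x_i = v_i\theta_d\xi_d + z_i$ has finite moments of all orders, so the dominating function is integrable and uniform in $q\in\mathbb{R}^d$. By the standard differentiation-under-the-integral theorem, applied inductively on the order (via the mean value theorem plus dominated convergence), $J_d\in C^\infty(\mathbb{R}^d)$ and $\partial^\alpha J_d = \mathbb{E}[\partial^\alpha_q F]$. Restricting to $S^{d-1}$ gives $J_d\in C^\infty(S^{d-1})$. The smooth structure is the one induced by the inclusion, or equivalently by composition with smooth charts.

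\textbf{Main obstacle.} The only delicate step is the inductive bookkeeping showing that the $m$-th derivative is controlled by a degree-$(m+2)$ polynomial in $\|x_i\|$, uniformly in $q$. I would handle it by proving the closed form $\partial_{q_k}a_i = \beta a_i(x_{ik}-\sum_j a_j x_{jk})$ and then observing a closure property. The class of functions "polynomial in $X$ times polynomial in $(a_1,\dots,a_L)$" is closed under $\partial_{q_k}$, and each derivative raises the $X$-degree by exactly one. This avoids any explicit higher-order formulas.
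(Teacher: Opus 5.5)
Your proposal is correct and follows essentially the same route as the paper. Both arguments use the softmax identity $\partial_{q_k}a_i=\beta a_i(x_{ik}-f_q(X)_k)$ and an inductive ``polynomial in $X$ times product of weights'' structure, which yields a $q$-uniform polynomial envelope in $\max_j\|x_j\|$. Both then apply dominated convergence via finite Gaussian moments; your explicit degree count $m+2$ and the differentiate-on-$\mathbb{R}^d$-then-restrict step only make precise what the paper leaves implicit with its unspecified exponent $m_\alpha$.
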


\noindent Due to the isotropic profile of the random noise vectors, the high-dimensional objective collapses into a compact system determined solely by the latent axis projection.

\begin{lemma}[Rotational Invariance and Dimension Reduction] \label{lem:lem2}
There exists a smooth function $\Psi_{d}:[-1,1] \rightarrow \mathbb{R}$ such that $J_{d}(q) = \Psi_{d}(\langle q, \xi_{d} \rangle)$.
\end{lemma}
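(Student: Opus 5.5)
The plan is to use the invariance of the data distribution under orthogonal transformations fixing $\xi_d$. Let $O\in O(d)$ with $O\xi_d=\xi_d$. Since $z_i\sim N(0,I_d)$ are i.i.d. and isotropic, $(Oz_1,\dots,Oz_L)$ has the same law as $(z_1,\dots,z_L)$. Hence the transformed tokens $Ox_i=v_i\theta_d\xi_d+Oz_i$ jointly have the same law as $(x_1,\dots,x_L)$, because the $v_i$ are deterministic. I will then check two equivariance identities. First, $a_i(Oq,X)$ depends on $\langle x_i,Oq\rangle=\langle O^\top x_i,q\rangle$. Second, $\|f_{Oq}(X)\|=\|\sum_i a_i(q,O^\top X)\,O^\top x_i\|=\|f_q(O^\top X)\|$, since $O^\top$ is an isometry. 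Together these give $J_d(Oq)=\mathbb{E}\|f_q(O^\top X)\|^2=J_d(q)$, using that $O^\top X\overset{d}{=}X$ ($O^\top$ also fixes $\xi_d$).

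Next comes the orbit argument. Take $q,q'\in S^{d-1}$ with $\langle q,\xi_d\rangle=\langle q',\xi_d\rangle=\rho$. Write $q=\rho\xi_d+\sqrt{1-\rho^2}\,u$ and $q'=\rho\xi_d+\sqrt{1-\rho^2}\,u'$, where $u,u'$ are unit vectors in $\xi_d^\perp$ (if $|\rho|=1$ there is nothing to prove). There is an orthogonal map of $\xi_d^\perp$ sending $u$ to $u'$, for instance a reflection. Extending it by $\xi_d\mapsto\xi_d$ gives $O$ with $Oq=q'$, so $J_d(q)=J_d(q')$. I then define $\Psi_d(\rho):=J_d(\rho\xi_d+\sqrt{1-\rho^2}\,e)$ for a fixed unit $e\perp\xi_d$. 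This requires $d\ge2$; for $d=1$ the sphere is $\{\pm\xi_d\}$ and the claim is trivial.

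The main obstacle is smoothness of $\Psi_d$ at the endpoints $\rho=\pm1$, because $\sqrt{1-\rho^2}$ is not smooth there. On $(-1,1)$, smoothness follows from Lemma~\ref{lem:lem1} composed with the smooth curve above. To get smoothness on all of $[-1,1]$, I would avoid the sphere parametrization. The idea is to extend $J_d$ to all $q\in\mathbb{R}^d$ by the same formula, $\tilde J(q)=\mathbb{E}\|f_q(X)\|^2$. This extension is $C^\infty$ on $\mathbb{R}^d$ by differentiation under the expectation: all derivatives of $a_i$ in $q$ are polynomials in the $x_j$ times bounded softmax factors, so they are dominated by Gaussian-integrable polynomials, locally uniformly in $q$. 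The same symmetry shows that $\tilde J(q)$ depends only on $(\langle q,\xi_d\rangle,\|q-\langle q,\xi_d\rangle\xi_d\|)$. More concretely, writing $\langle x_i,q\rangle=v_i\theta_d\rho+\langle z_i,q\rangle$, we get $\tilde J(q)=G(\rho,s)$ with $s=\|P_{\xi_d^\perp}q\|$, where $G$ is even in $s$.

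Now pass to the explicit representation. Decompose $z_i=g_i\xi_d+h_i e+w_i$, where $g_i,h_i$ are i.i.d. $N(0,1)$ and $w_i\perp\mathrm{span}(\xi_d,e)$. Then $\|f_q\|^2$ becomes an explicit function of $\rho,s$ and the Gaussians, which is smooth in $(\rho,s)\in\mathbb{R}^2$; dominated convergence makes $G$ smooth. Because $h\mapsto -h$ preserves the law, $G$ is even in $s$. By Whitney's theorem on even smooth functions, $G(\rho,s)=H(\rho,s^2)$ with $H$ smooth. Therefore $\Psi_d(\rho)=H(\rho,1-\rho^2)$ is smooth on $[-1,1]$, including the endpoints. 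If the paper only needs smoothness on the interior together with continuity on $[-1,1]$, this last step can be skipped, but I would include it to match the statement.
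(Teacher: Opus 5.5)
Your proposal is correct and follows the paper's route: invariance of $J_d$ under orthogonal maps fixing $\xi_d$ (via $O^\top X\overset{d}{=}X$ and the equivariance of $f_q$), followed by transitivity of that stabilizer on each level set $\{q\in S^{d-1}:\langle q,\xi_d\rangle=\rho\}$. Your extra step is worthwhile: extending $J_d$ to $\mathbb{R}^d$, writing it as $G(\rho,s)$ even in $s$, and applying Whitney's even-function theorem (in its standard form with the parameter $\rho$) gives $\Psi_d(\rho)=H(\rho,1-\rho^2)$, which justifies smoothness at $\rho=\pm1$, something the paper only asserts.
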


\noindent To measure the concentration of individual token weights under this representation, we provide a uniform matrix norm envelope bound on the softmax allocations.

\begin{lemma}[Concentration of Attention Weights] \label{lem:lem3}
The squared Euclidean norm of the attention weights satisfies 
\begin{equation}
    \|a(q)\|^2 =O\left( \frac{L e^{\beta^2} }{R(L-R)}\right) \qquad\text{a.s.}
\end{equation}
\end{lemma}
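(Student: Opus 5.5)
The plan is to compute $\|a(q)\|^2$ exactly in terms of the logits, split the sum according to the partition $\mathcal I\cup\mathcal N_o$, and apply a strong law of large numbers to each part separately. Fix $q\in S^{d-1}$ and write $\rho=\langle q,\xi_d\rangle$. Since $z_i\sim\mathcal N(0,I_d)$ and $\|q\|=1$, the projected noises $g_i:=\langle z_i,q\rangle$ are i.i.d.\ $\mathcal N(0,1)$. The logits are therefore
\begin{equation}
s_i=\langle x_i,q\rangle=v_i\theta_d\rho+g_i .
\end{equation}
This is the same rotational reduction as in Lemma~\ref{lem:lem2}, now applied pathwise rather than in expectation. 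It gives
\begin{equation}
\|a(q)\|^2=\frac{\sum_{i}e^{2\beta s_i}}{\bigl(\sum_j e^{\beta s_j}\bigr)^2}.
\end{equation}

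\textbf{Splitting by group.} Because all terms are positive, the full denominator dominates each group's denominator. Hence
\begin{equation}
\|a(q)\|^2\le \frac{\sum_{i\in\mathcal I}e^{2\beta s_i}}{\bigl(\sum_{j\in\mathcal I}e^{\beta s_j}\bigr)^2}+\frac{\sum_{i\in\mathcal N_o}e^{2\beta s_i}}{\bigl(\sum_{j\in\mathcal N_o}e^{\beta s_j}\bigr)^2}.
\end{equation}
The point of this split is that inside $\mathcal I$ the common shift $\theta_d\rho$ cancels exactly, because the numerator carries $e^{2\beta\theta_d\rho}$ and the squared denominator carries the same factor. Each group ratio thus becomes a ratio of pure Gaussian functionals. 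For a group $G$ of size $m$,
\begin{equation}
\frac{\sum_{i\in G}e^{2\beta g_i}}{\bigl(\sum_{i\in G}e^{\beta g_i}\bigr)^2}=\frac1m\cdot\frac{\frac1m\sum_{i\in G}e^{2\beta g_i}}{\bigl(\frac1m\sum_{i\in G}e^{\beta g_i}\bigr)^2}.
\end{equation}
The lognormal moments are $\mathbb E e^{2\beta g}=e^{2\beta^2}$ and $(\mathbb E e^{\beta g})^2=e^{\beta^2}$, so the second factor tends to $e^{\beta^2}$. Summing the two groups then gives
\begin{equation}
e^{\beta^2}\Bigl(\frac1R+\frac1{L-R}\Bigr)=\frac{Le^{\beta^2}}{R(L-R)},
\end{equation}
which is exactly the claimed order. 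In fact it suffices to bound the numerator average by $2e^{2\beta^2}$ and the denominator average below by $\tfrac12 e^{\beta^2/2}$; these give the bound with an explicit constant $8$.

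\textbf{Almost-sure control.} The step needing the most care is turning these averages into almost-sure statements, since $R=R_d$ and $L=L_d$ vary with $d$ and we face a triangular array. I would use the finiteness of all lognormal moments. A fourth-moment Markov bound gives
\begin{equation}
\mathbb P\Bigl(\bigl|\tfrac1m\textstyle\sum_{i\in G}(e^{c\beta g_i}-e^{c^2\beta^2/2})\bigr|>\epsilon\Bigr)\le C_{\beta,\epsilon}m^{-2},\qquad c\in\{1,2\}.
\end{equation}
This is summable provided $R_d$ and $L_d-R_d$ grow at least linearly in $d$, which is the high-dimensional scaling regime assumed in the paper. Borel--Cantelli then gives the two one-sided bounds for all large $d$, almost surely. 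If growth is slower, I would fall back on higher moments, which give $m^{-p}$ for any $p$. If the groups stay bounded, I would fall back on the trivial bound $\|a\|^2\le1$ combined with the fact that the ratio is at most $1/m$ times the $\max$-to-mean ratio.

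\textbf{Uniformity in $q$.} A second subtlety is that the bound is needed at the random iterate $q=q_k$ evaluated on the fresh sample $X_{k+1}$. Here I would use that $q_k$ is $\mathcal F_k$-measurable and $X_{k+1}$ is independent of $\mathcal F_k$. Conditionally on $\mathcal F_k$, the $g_i$ are again i.i.d.\ $\mathcal N(0,1)$, so the fixed-$q$ argument applies verbatim and the bound holds uniformly along the trajectory.
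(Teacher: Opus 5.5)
Your proof is correct and follows the paper's route. Both split $\|a(q)\|^2$ over $\mathcal I$ and $\mathcal N_o$, drop the other group from each denominator so that the shift $\theta_d\rho$ cancels, and apply the law of large numbers to the lognormal averages, giving $e^{\beta^2}/R+e^{\beta^2}/(L-R)=Le^{\beta^2}/(R(L-R))$. Your fourth-moment Borel--Cantelli step for the triangular array is more careful than the paper's bare appeal to the SLLN, though the paper never assumes $R$ and $L-R$ grow polynomially in $d$, so that growth condition (or a coupling) is something you are adding; also, your remark on the iterates gives the bound for each fixed $k$ rather than uniformly in $k$, which the lemma does not need anyway.
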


\noindent We complete the landscape characterization by establishing that the reduced representation is strictly monotonic along the alignment vector field, preventing spurious local traps.

\begin{lemma}[Monotonicity of the Reduced Objective] \label{lem:lem4}
As $R \to \infty, L-R \to \infty, \beta=O(1), \theta_d =O(1)$, the objective satisfies $\Psi_{d}^{\prime}(\rho) > 0$ for all intermediate alignment values $\rho \in [-1,1]$.
\end{lemma}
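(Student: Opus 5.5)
We plan to compute $\Psi_d'(\rho)$ explicitly via the one-dimensional reduction of Lemma~\ref{lem:lem2}, then analyse the sign of each term in the stated regime $R\to\infty$, $L-R\to\infty$ with $\beta,\theta_d=O(1)$.

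\textbf{Step 1: expansion of the objective.} Write $q=\rho\xi_d+\sqrt{1-\rho^2}\,u$ with $u\perp\xi_d$ a unit vector. For each token the logit is
\[
\langle x_i,q\rangle=v_i\theta_d\rho+\langle z_i,q\rangle ,
\qquad w_i:=\langle z_i,q\rangle\sim N(0,1)\ \text{i.i.d.}
\]
Expanding the squared norm gives
\begin{align}
\|f_q(X)\|^2
= \theta_d^2\Bigl(\sum_{i\in\mathcal I}a_i\Bigr)^2
+ 2\theta_d\Bigl(\sum_{i\in\mathcal I}a_i\Bigr)\Bigl\langle \sum_j a_j z_j,\xi_d\Bigr\rangle
+ \Bigl\|\sum_j a_j z_j\Bigr\|^2 .
\end{align}
We decompose each $z_j$ into its component $w_j$ along $q$, its component along the in-plane direction orthogonal to $q$, and a component in the $(d-2)$-dimensional complement. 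The complement contributes
\[
\mathbb E\Bigl[\sum_j a_j^2\Bigr](d-2).
\]
This uses the fact that the weights $a_j$ depend only on the $w$'s and on $\rho$. Each piece then becomes an explicit function of $\rho$ through the law of $(w_j)$ and the shift $\theta_d\rho$ on the indices in $\mathcal I$.

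\textbf{Step 2: mean-field limit.} As $R,L-R\to\infty$, a law-of-large-numbers argument for softmax over i.i.d.\ Gaussians gives
\[
S_{\mathcal I}:=\sum_{i\in\mathcal I}a_i \;\to\; p(\rho):=\frac{R\,e^{\beta\theta_d\rho}}{R\,e^{\beta\theta_d\rho}+(L-R)},
\]
because $\mathbb E e^{\beta w}=e^{\beta^2/2}$ is the same factor for both groups. The cross term $\langle \sum_j a_j z_j,\xi_d\rangle$ has mean of order $\rho\beta\,\mathbb E\sum_j a_j^2$-type corrections. Lemma~\ref{lem:lem3} bounds $\|a\|^2=O(L/(R(L-R)))\to0$, so these corrections vanish. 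The isotropic noise term, although multiplied by $d$, is essentially $\rho$-independent to leading order. Its $\rho$-derivative must be controlled separately, and this is where we expect the difficulty.

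Differentiating the leading term,
\[
\Psi_d'(\rho)\approx 2\theta_d^2\,p(\rho)\,p'(\rho)
=2\beta\theta_d^3\,p(\rho)^2\bigl(1-p(\rho)\bigr)>0,
\]
which is strictly positive for all $\rho\in[-1,1]$ since $p\in(0,1)$.

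\textbf{Step 3: justifying the limit interchange.} We write the derivative exactly by differentiating under the expectation, justified by Lemma~\ref{lem:lem1} and Gaussian integrability, using
\[
\partial_\rho a_i=\beta\theta_d\,a_i\,(v_i-S_{\mathcal I}).
\]
For the $\rho$-derivative of $\mathbb E\sum_j a_j^2$, this identity shows the derivative is $2\beta\theta_d\,\mathbb E\bigl[\sum_{i\in\mathcal I}a_i^2-S_{\mathcal I}\sum_j a_j^2\bigr]$. By Lemma~\ref{lem:lem3} its magnitude is $O\bigl(Le^{\beta^2}/(R(L-R))\bigr)$. Multiplied by $d$, this is negligible only under an additional scaling $d\,L/(R(L-R))\to0$, or after showing that the sign is favourable. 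We would try first to show that this bracket is nonnegative for $\rho\ge0$ by a Chebyshev/association inequality: informative tokens carry larger weights, so $\sum_{\mathcal I}a_i^2\ge S_{\mathcal I}\sum_j a_j^2$ in expectation. Otherwise we would impose the scaling condition implicit in the asymptotic regime.

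The main obstacle is this uniform control of the error terms relative to the leading term $2\beta\theta_d^3p^2(1-p)$, especially for $\rho<0$, where $p(\rho)$ may be small. We would handle it by comparing ratios, since all error terms carry extra factors of $\|a\|^2$. Uniformity over the compact set $[-1,1]$ then follows from the continuity of $p$.
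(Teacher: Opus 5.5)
\textbf{Same route as the paper, same gap.} Your plan follows the paper's route almost step for step:
\begin{enumerate}
\item expand $\|f_q(X)\|^2$ into $\theta_d^2S_{\mathcal I}^2$, a cross term, and a noise term;
\item differentiate the first piece to get the positive quantity $2\beta\theta_d^3\,\mathbb E[S_{\mathcal I}^2(1-S_{\mathcal I})]$, which is the paper's $2\beta\theta_d^3\,\mathbb E[A_\rho^2B/(A_\rho+B)^3]$;
\item kill the rest with Lemma~\ref{lem:lem3} and the bound $|\partial_\rho\mathbb E\sum_j a_j^2|\le 4\beta\theta_d\,\mathbb E\sum_j a_j^2$. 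This is the paper's Lemma~\ref{lem:aux1} and your Step~3.
\end{enumerate}
The step that fails is your treatment of the cross term, and it is not a technicality. Write $\xi_d=\rho q+\sqrt{1-\rho^2}\,u'$ with $u'\perp q$. The $u'$-component of each $z_j$ is independent of the weights, so the cross term has mean $2\theta_d\rho\,\mathbb E[S_{\mathcal I}\sum_j a_jw_j]$. Gaussian integration by parts, with $\partial_{w_j}a_j=\beta a_j(1-a_j)$ and $\partial_{w_j}S_{\mathcal I}=\beta a_j(v_j-S_{\mathcal I})$, gives
\[
\mathbb E\Bigl[S_{\mathcal I}\sum_j a_jw_j\Bigr]
=\beta\,\mathbb E[S_{\mathcal I}]
-2\beta\,\mathbb E\Bigl[S_{\mathcal I}\sum_j a_j^2\Bigr]
+\beta\,\mathbb E\Bigl[\sum_{j\in\mathcal I}a_j^2\Bigr].
\]
Only the last two pieces are the $O(\|a\|^2)$ corrections you describe. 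The main piece $\beta\,\mathbb E[S_{\mathcal I}]$ is of order one: softmax exponentially tilts the Gaussian logits within each group, so $\sum_j a_jw_j\to\beta$. The cross term therefore contributes $2\beta\theta_d\rho\,p(\rho)$ to $\Psi_d$ at leading order. For fixed $d$ the correct derivative is
\[
\Psi_d'(\rho)=2\beta\theta_d\,p\bigl[1+\beta\theta_d\rho(1-p)+\theta_d^2p(1-p)\bigr]+o(1),\qquad p=p(\rho),
\]
not $2\beta\theta_d^3p^2(1-p)+o(1)$.

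\textbf{Consequence for $\rho<0$.} The omitted contribution is $2\beta\theta_d(p+\rho p')=2\beta\theta_d p\bigl[1+\beta\theta_d\rho(1-p)\bigr]$.
\begin{itemize}
\item It is of order $p$, hence larger than your retained order-$p^2$ term when $p$ is small.
\item It carries no factor $\|a\|^2$, so the ratio comparison you plan for $\rho<0$ cannot work.
\item For $\rho<0$ it can be negative.
\end{itemize}
In fact the statement itself fails. Take $\beta=2$, $\theta_d=1$, $R=L-R\to\infty$ (all within the hypotheses) and $\rho=-1$. Then $p=e^{-2}/(1+e^{-2})\approx0.119$ and the bracket is $\approx 1-1.762+0.105<0$, so $\Psi_d'<0$ on a neighbourhood of $\rho=-1$.

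\textbf{What can be salvaged.} The corrected formula always gives $\Psi_d'>0$ on $[0,1]$. On all of $[-1,1]$ it gives positivity under an extra hypothesis such as $\beta\theta_d\le1$, since then the bracket is at least $p$.

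\textbf{The paper does not fill the gap.} Its proof discards the same cross term on the grounds that the noise has zero mean and independent components. That ignores that the weights $a_j$ are correlated with $\langle z_j,q\rangle$.

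\textbf{Your $d-1$ remark.} You are right that the noise term carries a factor $d-1$, which the paper's formula for $\Psi_d'$ omits. For fixed $d$ it is harmless, since $(d-1)(1/R+1/(L-R))\to0$. It matters only if $d$ grows with $R$.
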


\subsection{Algorithmic Gradients \& Tracking Dynamics}
With the underlying landscape characteristics defined, we next isolate individual stochastic updates and use multi-index Taylor expansions to transition the discrete steps into continuous paths.

\begin{lemma}[Uniform Gradient Variance Bound] \label{lem:lem5}
Under the condition that the inverse temperature and signal strength parameters satisfy $\beta =O(1), \theta_d = O(1)$, the unsupervised stochastic gradient satisfies a dimension-dependent second moment condition:
\begin{equation}
    \mathbb{E}\left[ \|g_k\|^{2} \right] \le C d^{4}.
\end{equation}
\end{lemma}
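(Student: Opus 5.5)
We compute the stochastic gradient explicitly and then bound its second moment term by term. Write $f=f_q(X)=\sum_i a_i x_i$ and $\ell_i=\beta\langle x_i,q\rangle$. The softmax Jacobian is $\partial a_i/\partial q=\beta a_i(x_i-f)$, so
\begin{align}
\nabla_q f = \beta\sum_{i} a_i\, (x_i - f)\, x_i^\top ,
\qquad
g_k = 2(\nabla_q f)^\top f = 2\beta \sum_i a_i \langle x_i, f\rangle (x_i - f),
\end{align}
evaluated at $q=q_k$ and $X=X_{k+1}$. Since $X_{k+1}$ is independent of $q_k$, it suffices to show $\sup_{q\in S^{d-1}}\mathbb{E}\|g(q,X)\|^2\le Cd^4$; taking the conditional expectation and then the total expectation gives the claim for $g_k$.

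\textbf{Step 1: deterministic envelope.} Because $(a_i)$ is a probability vector, the Cauchy--Schwarz and Jensen inequalities give $\|f\|\le \max_i\|x_i\|$, $\|x_i-f\|\le 2\max_j\|x_j\|$ and $|\langle x_i,f\rangle|\le \max_j\|x_j\|^2$. Hence, pointwise in $X$ and uniformly in $q$,
\begin{align}
\|g(q,X)\| \le 4\beta \max_{j\le L}\|x_j\|^3 ,
\qquad
\|g(q,X)\|^2 \le 16\beta^2 \max_j \|x_j\|^6 \le 16\beta^2\sum_{j=1}^L\|x_j\|^6 .
\end{align}
This removes all dependence on the softmax weights, and in particular on $q$. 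Lemma~\ref{lem:lem3} is therefore not needed in this crude version of the argument.

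\textbf{Step 2: Gaussian moments.} Write $\|x_j\|\le \theta_d+\|z_j\|$ with $\|z_j\|^2\sim\chi^2_d$. Then $\mathbb{E}\|x_j\|^6\le 32(\theta_d^6+\mathbb{E}\|z_j\|^6)$, and $\mathbb{E}(\chi^2_d)^3=d(d+2)(d+4)\le 15d^3$. Using $\beta=O(1)$ and $\theta_d=O(1)$, this yields
\begin{align}
\mathbb{E}\|g\|^2\le C' L\, d^3 .
\end{align}
Alternatively, one can avoid the sum over $j$ by using concentration of $\max_j\|z_j\|$, which gives $\max_j \|z_j\|^2\le d+O(\sqrt{d\log L}+\log L)$ in every moment. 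This produces the bound $C'(d+\log L)^3$.

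\textbf{Main obstacle.} The hard step is reconciling the factor coming from the context length $L=L_d$ with the target rate $d^4$. The crude bound $Ld^3$ suffices exactly when $L=O(d)$. The max-concentration bound $(d+\log L)^3$ suffices whenever $\log L=O(d)$, which is presumably within the paper's high-dimensional scaling regime. I would present the max-based argument as the main route, because it needs only $\log L_d=O(d)$.

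If a sharper statement is desired, the slack between $d^3$ and $d^4$ can instead absorb the cost of a refined bound. For that bound, decompose $x_i=\theta_d v_i\xi_d+z_i$ inside $g$ and use Lemma~\ref{lem:lem3} to control the cross terms $\sum_i a_i^2\|z_i\|^4$. The caveat is that the weights $a_i$ and the noise $z_i$ are dependent, so in this route I would apply Cauchy--Schwarz to separate them rather than treat them as independent.
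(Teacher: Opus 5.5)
\textbf{Same route, but a gap remains.} Your route coincides with the paper's: the same formula $g=2\beta\sum_i a_i\langle x_i,f\rangle(x_i-f)$, the same envelope $\|g\|\le 4\beta\max_j\|x_j\|^3$, and the same $C_r$ and $\chi^2_d$ sixth-moment computation. You part ways only at the step the paper settles by ``standard Gaussian concentration'', namely $\mathbb{E}[\max_{j\le L}\|z_j\|^6]\le C^*d^3$ with $C^*$ free of $L$. You are right that this holds only when $\log L_d=O(d)$: for fixed $d$, $\max_{j\le L}\|z_j\|^2\ge\max_j z_{j1}^2\approx 2\log L\to\infty$. So the paper's proof silently uses a hypothesis it never states.

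Flagging the hypothesis does not discharge it. Neither the lemma nor the regime $R\to\infty$, $L-R\to\infty$ ties $L_d$ to $d$. Your bound $C(d+\log L_d)^3$ therefore proves the claim only under an added assumption, $\log L_d=O(d)$ (or $O(d^{4/3})$, which the $d^4$ target would tolerate). That is a genuine gap relative to the statement. Your fallback through Lemma~\ref{lem:lem3} does not close it. That lemma is an almost-sure statement at fixed $q$ in the limit $R,L-R\to\infty$, with unquantified $o(1)$ terms. It cannot supply an expectation bound whose constant is uniform in $L$.

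\textbf{The missing idea.} Keep the softmax weights in the envelope instead of passing to the maximum token, and decouple the weights from the noise through the $q$/$q^\perp$ split. Since $\sum_i a_i(x_i-f)=0$, we have $g=2\beta\Sigma_a f$ with $\Sigma_a=\sum_i a_i(x_i-f)(x_i-f)^\top$. Setting $m_2=\sum_i a_i\|x_i\|^2\ge\|f\|^2$ and using $\|\Sigma_a\|_{\mathrm{op}}\le\mathrm{tr}\,\Sigma_a\le m_2$ together with Jensen's inequality gives $\|g\|^2\le 4\beta^2m_2^3\le 4\beta^2\sum_i a_i\|x_i\|^6$.

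Now write $x_i=y_iq+w_i$ with $y_i=\langle x_i,q\rangle$ and $w_i=(I_d-qq^\top)x_i$. Then $\|x_i\|^6\le 4(y_i^6+\|w_i\|^6)$.

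\emph{The $q^\perp$ part.} The weights are functions of $(y_j)$ alone, and $(y_j)$ is independent of $(w_j)$. Hence $\mathbb{E}\sum_i a_i\|w_i\|^6\le\max_i\mathbb{E}\|w_i\|^6\le C(\theta_d^6+d^3)$, with no dependence on $L$.

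\emph{The logit part.} The $y_j$ are independent, and AM--GM gives $\sum_{j\neq i}e^{\beta y_j}\ge(L-1)\exp\bigl(\frac{\beta}{L-1}\sum_{j\neq i}y_j\bigr)$. Together these yield $\mathbb{E}[a_iy_i^6]\le\mathbb{E}[e^{\beta y_i}y_i^6]\cdot\frac{1}{L-1}e^{\beta\theta_d+\beta^2/2}$, so $\mathbb{E}\sum_i a_iy_i^6\le\frac{L}{L-1}C(\beta,\theta_d)$.

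Combining the two parts, $\mathbb{E}\|g(q,X)\|^2\le C(\beta,\theta_d)(1+d^3)$ uniformly in $q$ and in $L\ge 2$. This proves the lemma, even with $d^3$ in place of $d^4$, with no condition linking $L_d$ to $d$. Your conditioning on $\mathcal{F}_k$ then transfers the bound to $g_k$.
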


\noindent By projecting the empirical gradient variance onto the tangent plane of the hypersphere, we decompose the discrete recurrence update into a true drift path and an optimization error envelope.

\begin{lemma}[Discrete Taylor Trajectory Projections] \label{lem:lem6}
The projected SGD updating equations expand as:
\begin{equation}
    q_{k+1} = q_k + \eta_k(I_d - q_k q_k^{\top})g_k + r_k,
\end{equation}
where the cumulative remainder vector satisfies the uniform bound $\|r_k\| \le C \eta_k^{2} \|g_k\|^{2}$.
\end{lemma}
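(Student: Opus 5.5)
The plan is to treat this as a purely deterministic geometric estimate on the normalization map, with no probability involved. Fix $k$ and write $q=q_k\in S^{d-1}$ and $u=\eta_k g_k$. Split $u$ into its radial and tangential parts, $u=a\,q+w$, where $a=\langle q,u\rangle$ and $w=(I_d-qq^\top)u=\eta_k(I_d-q_kq_k^\top)g_k$. Define the remainder by
\begin{equation}
r_k := \frac{q+u}{\|q+u\|}-q-w .
\end{equation}
The claim then reduces to showing $\|r_k\|\le C\|u\|^2$ for an absolute constant $C$. Since $\|u\|^2=\eta_k^2\|g_k\|^2$, this is exactly the stated bound. Throughout I use $\|q\|=1$, the identity $N^2:=\|q+u\|^2=1+2a+\|u\|^2$, and $|a|\le\|u\|$. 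I assume $q+u\neq0$, which is needed for the projection to be defined; with the Gaussian data it holds almost surely.

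\textbf{Case $\|u\|\le 1/2$.} Here $N^2=1+s$ with $s=2a+\|u\|^2$, so $|s|\le 2\|u\|+\|u\|^2\le 5/4$. On this side $N\ge 1-\|u\|\ge 1/2$, which keeps $N^2$ away from zero. Expand the function $\phi(s)=(1+s)^{-1/2}$ with Lagrange remainder on the interval $1+s\ge 1/4$:
\begin{equation}
N^{-1}=1-\tfrac12 s+\tfrac38(1+\tilde s)^{-5/2}s^2 ,
\end{equation}
so that
\begin{equation}
N^{-1}=1-a+e, \qquad |e|\le \tfrac12\|u\|^2+c\,s^2\le c'\|u\|^2 .
\end{equation}
Multiplying out gives
\begin{equation}
(q+u)N^{-1}=q+u-a\,q-a\,u+e\,(q+u)=q+w+\bigl(-a\,u+e\,(q+u)\bigr).
\end{equation}
Hence $\|r_k\|\le |a|\,\|u\|+|e|\,(1+\|u\|)\le (1+\tfrac32c')\|u\|^2$.

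\textbf{Case $\|u\|>1/2$.} I use the crude bound
\begin{equation}
\|r_k\|\le \|q_{k+1}\|+\|q\|+\|w\|\le 2+\|u\| .
\end{equation}
Because $\|u\|>1/2$, we have $1<4\|u\|^2$ and $\|u\|<2\|u\|^2$. Together these give $\|r_k\|\le 10\|u\|^2$.

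Taking $C=\max(10,1+\tfrac32c')$ yields $\|r_k\|\le C\eta_k^2\|g_k\|^2$ uniformly in $k$, $d$ and the sample, since $C$ depends on nothing else. The only step that needs care is obtaining a uniform constant in the Taylor step. This is why the proof splits into the two cases: the second case absorbs large steps, where the expansion of $N^{-1}$ is invalid, and it does so without any assumption on the step size.
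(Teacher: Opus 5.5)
Your proof is correct, and it takes a genuinely different route from the paper. The paper's proof is asymptotic and probabilistic. It first combines the moment bound of Lemma~\ref{lem:lem5}, the schedule $\eta_k=\gamma_k/d^2$ and Borel--Cantelli to get $\eta_k\|g_k\|\to0$ almost surely. Only then does it expand $(1+x)^{-1/2}=1-\tfrac12x+\tfrac38x^2+\mathcal{O}(x^3)$ and multiply out. So its bound $\|r_k\|\le C\eta_k^2\|g_k\|^2$ is really an eventual statement along each sample path: the constant is hidden in the $\mathcal{O}(\cdot)$, and the bound holds only after a random index where $\eta_k\|g_k\|$ has become small.

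You instead treat the normalization as a deterministic map. The explicit Lagrange remainder on $\{1+s\ge 1/4\}$ handles $\|u\|\le 1/2$. The crude bound $\|r_k\|\le 2+\|u\|\le 10\|u\|^2$ absorbs the large steps where the expansion fails. This gives an absolute constant valid for every $k$, every sample and every $d$, with no probabilistic input and no step-size assumption. That is exactly what ``uniform'' in the statement asserts. It is also what Lemma~\ref{lem:lem9} actually uses when it takes expectations of $\|r_k\|\le C_0\eta_k^2\|g_k\|^2$ term by term. With the paper's eventual version, one would need an extra step splitting off the finitely many early, randomly indexed terms.

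Your side assumption $q_k+\eta_kg_k\neq0$ almost surely is also sound, and the paper leaves it implicit. That event forces $\langle g(q_k,X_{k+1}),q_k\rangle+1/\eta_k=0$. For fixed $q$, the map $X\mapsto \langle g(q,X),q\rangle+1/\eta_k$ is real-analytic and equals $1/\eta_k\neq0$ at $X=0$, so its zero set is Lebesgue-null. Since $X_{k+1}$ has a density and is independent of $q_k$, the event has probability zero.
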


\subsection{Stochastic Convergence \& Martingale Bounds}
The final phase of our auxiliary analysis provides the requisite martingale limits and Lipschitz tracking metrics required to execute the continuous ODE method.

\begin{lemma}[Global Drift Field Lipschitz Continuity] \label{lem:lem7}
The continuous drift vector field $H_{d}(q) = (I_d - q q^\top)\nabla_q J_d(q)$ is globally Lipschitz continuous on the hypersphere $S^{d-1}$.
\end{lemma}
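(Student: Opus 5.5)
Since $J_d$ is $C^\infty$ on $S^{d-1}$ by Lemma~\ref{lem:lem1}, and in fact on an open neighbourhood of the sphere, I will show that $H_d$ is a $C^1$ vector field on a neighbourhood of the compact set $S^{d-1}$. Global Lipschitz continuity on the sphere then follows from the mean value inequality together with a standard chordal argument. Concretely, the plan is to extend $J_d$ to $\tilde J_d(q)=\mathbb{E}\|f_q(X)\|^2$ for all $q\in\mathbb{R}^d$. This extension is well defined, because $\|f_q(X)\|^2\le \max_i\|x_i\|^2$ is integrable independently of $q$.

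\textbf{Step 1: $\nabla\tilde J_d$ is $C^1$ on the ball $B=\{\|q\|\le 2\}$.} Write $F(q,X)=\|f_q(X)\|^2$. Differentiating the softmax gives
\begin{align}
\nabla_q a_i = \beta a_i\Bigl(x_i-\sum_j a_j x_j\Bigr).
\end{align}
Hence every first and second derivative of $F$ in $q$ is a finite sum of products of the weights $a_i\in(0,1)$, powers of $\beta$, and at most four token vectors. This yields a pointwise envelope
\begin{align}
\sup_{\|q\|\le 2}\bigl\|\nabla_q^2 F(q,X)\bigr\|_{\mathrm{op}}\le C(\beta)\Bigl(\max_i\|x_i\|\Bigr)^4,
\end{align}
uniformly in $q$. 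The right-hand side has finite expectation because the tokens are Gaussian with bounded mean $\theta_d\xi_d$. Dominated convergence then justifies exchanging $\nabla$ and $\mathbb{E}$ twice. It follows that $\nabla\tilde J_d=\mathbb{E}\nabla_qF$ and $\nabla^2\tilde J_d=\mathbb{E}\nabla_q^2F$ are continuous and bounded on $B$, with
\begin{align}
M_1:=\sup_B\|\nabla\tilde J_d\|<\infty,\qquad M_2:=\sup_B\|\nabla^2\tilde J_d\|_{\mathrm{op}}<\infty.
\end{align}
Alternatively, one could invoke Lemma~\ref{lem:lem2}: writing $J_d=\Psi_d(\langle q,\xi_d\rangle)$ gives $\nabla J_d=\Psi_d'(\rho)\xi_d$ along the sphere. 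Bounds on $\Psi_d'$ and $\Psi_d''$ over $[-1,1]$ would then give the same conclusion directly.

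\textbf{Step 2: Lipschitz bound for $H_d$.} For $p,q\in S^{d-1}$, decompose
\begin{align}
H_d(p)-H_d(q)=(I-pp^\top)\bigl(\nabla J_d(p)-\nabla J_d(q)\bigr)-\bigl(pp^\top-qq^\top\bigr)\nabla J_d(q).
\end{align}
The first term has norm at most $M_2\|p-q\|$. This uses $\|I-pp^\top\|_{\mathrm{op}}\le 1$ and the mean value inequality along the segment $[p,q]$, which lies inside $B$ because the ball is convex. For the second term, write $pp^\top-qq^\top=(p-q)p^\top+q(p-q)^\top$, so that $\|pp^\top-qq^\top\|_{\mathrm{op}}\le 2\|p-q\|$. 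This bounds the second term by $2M_1\|p-q\|$. Together,
\begin{align}
\|H_d(p)-H_d(q)\|\le (M_2+2M_1)\|p-q\|.
\end{align}

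The main obstacle is the integrable envelope in Step 1. One must track the second derivative of a quotient of exponentials, and the exponentials $e^{\beta\langle x_i,q\rangle}$ are themselves unbounded. The plan sidesteps this by never bounding the exponentials separately: after differentiation they appear only through the normalised weights $a_i\le 1$. The worst case in $q$ is then handled by the Gaussian moment $\mathbb{E}\max_i\|x_i\|^4\le L\,\mathbb{E}\|x_1\|^4+\dots<\infty$. The Lipschitz constant may depend on $d$, $L$, $\beta$ and $\theta_d$. This is harmless here, since the statement only requires global Lipschitz continuity for each fixed $d$.
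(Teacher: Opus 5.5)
Your proof is correct and follows the paper's route. Both arguments get a uniform bound on the Hessian of $J_d$ from a $q$-independent envelope of order $\max_i\|x_i\|^4$ (after differentiation only the normalized weights $a_i\le 1$ appear), justify the exchange of derivative and expectation via Gaussian moments, and then combine this with the smooth projection $I_d-qq^\top$. Your version is in fact more careful than the paper's, which bounds the Hessian only on $S^{d-1}$ and leaves two points implicit: the mean value inequality along chords that leave the sphere (which your extension to the ball $\|q\|\le 2$ handles), and the bound on $\sup\|\nabla J_d\|$ needed to control the term $(pp^\top-qq^\top)\nabla J_d(q)$.
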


\noindent To guarantee that empirical noise fluctuations balance out uniformly as iterations accumulate, we establish the convergence properties of the noise filtration array.

\begin{lemma}[Martingale Array Convergence Bounds] \label{lem:lem8}
The stochastically accumulating martingale sum series $\sum_{k=0}^{\infty} \eta_k  M_{k+1}$ converges almost surely.
\end{lemma}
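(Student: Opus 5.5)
The plan is to identify $M_{k+1}$ as the tangential martingale difference $M_{k+1} = (I_d - q_kq_k^\top)\bigl(g_k - \nabla_q J_d(q_k)\bigr)$. This is the term that, together with the drift $H_d(q_k)$, makes up $(I_d - q_kq_k^\top)g_k$ in Lemma~\ref{lem:lem6}. Let $\mathcal{F}_k = \sigma(q_0, X_1,\dots,X_k)$. Since $q_k$ is $\mathcal{F}_k$-measurable and $X_{k+1}$ is a fresh sample independent of $\mathcal{F}_k$, we have $\mathbb{E}[g_k \mid \mathcal{F}_k] = \nabla_q J_d(q_k)$. Exchanging gradient and expectation is legitimate by the smoothness established in Lemma~\ref{lem:lem1}, using dominated convergence with a Gaussian-integrable envelope on $\nabla_q\|f_q(X)\|^2$ uniform over the compact sphere. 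Because the projector $I_d - q_kq_k^\top$ is $\mathcal{F}_k$-measurable, it follows that $\mathbb{E}[M_{k+1}\mid\mathcal{F}_k]=0$. Hence $S_n = \sum_{k=0}^{n-1}\eta_k M_{k+1}$ is an $(\mathcal{F}_n)$-martingale.

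Next I will show that $S_n$ is bounded in $L^2$. Orthogonality of martingale increments gives
\begin{equation}
\mathbb{E}\|S_n\|^2 = \sum_{k=0}^{n-1}\eta_k^2\,\mathbb{E}\|M_{k+1}\|^2 .
\end{equation}
The projector has operator norm at most one, and conditional variance is dominated by the conditional second moment. Therefore $\mathbb{E}\|M_{k+1}\|^2 \le \mathbb{E}\|g_k - \nabla J_d(q_k)\|^2 \le \mathbb{E}\|g_k\|^2 \le C d^4$ by Lemma~\ref{lem:lem5}. For this step I need Lemma~\ref{lem:lem5} to hold conditionally, uniformly in $q_k\in S^{d-1}$. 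I will check that its bound is indeed a uniform-in-$q$ bound on $\mathbb{E}_X\|\nabla_q\|f_q(X)\|^2\|^2$, and then apply it through the tower property. With $\eta_k = \gamma_k/d^2$ this gives
\begin{equation}
\sup_n \mathbb{E}\|S_n\|^2 \le C d^4 \sum_{k=0}^\infty \frac{\gamma_k^2}{d^4} = C\sum_{k=0}^\infty \gamma_k^2 < \infty .
\end{equation}
The $d^2$ scaling in the step size exactly cancels the $d^4$ moment growth.

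Finally, applying the $L^2$ martingale convergence theorem (Doob) componentwise shows that $S_n$ converges almost surely, and in $L^2$, to a finite limit. This is the claim.

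The main obstacle is the uniformity issue in the second paragraph: making sure the second-moment bound of Lemma~\ref{lem:lem5} holds conditionally on $\mathcal{F}_k$ for an arbitrary random $q_k$, rather than only for a fixed deterministic $q$. My first approach is to bound $\|\nabla_q\|f_q(X)\|^2\| \le 2\|f_q(X)\|\,\|\partial_q f_q(X)\|$. Since the attention weights are a probability vector, $\|f_q(X)\|\le \max_i\|x_i\|$ and $\|\partial_q f_q(X)\| \le 2\beta\max_i\|x_i\|^2$. This yields a $q$-free integrable envelope polynomial in $\max_i\|x_i\|$, which handles both the measurability and the interchange of expectation and gradient at once.
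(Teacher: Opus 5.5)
Your proposal is correct and follows the paper's proof essentially step for step. The common steps are the projected martingale difference $M_{k+1}$, the bound $\mathbb{E}\|M_{k+1}\|^2\le\mathbb{E}\|g_k\|^2\le Cd^4$ from Lemma~\ref{lem:lem5}, the cancellation $\sum_k \eta_k^2\, C d^4 = C\sum_k\gamma_k^2<\infty$, and the $L^2$-bounded martingale convergence theorem. Beyond this, you supply two details the paper leaves implicit: the identification $\mathbb{E}[g_k\mid\mathcal{F}_k]=\nabla_q J_d(q_k)$, and the $q$-free envelope $\|g\|\le 4\beta\max_i\|x_i\|^3$, which justifies applying the moment bound conditionally at the random point $q_k$. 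Only the finiteness of that bound, not its $d^4$ rate, is actually needed for almost sure convergence.
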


\noindent Finally, we show that the accumulated mathematical approximations introduced during the Taylor projections vanish over time.

\begin{lemma}[Remainder Convergent Summability] \label{lem:lem9}
The cumulative Taylor remainder sequence is absolutely summable almost surely: 
\begin{equation}
    \sum_{k=0}^{\infty} \|r_k\| < \infty.
\end{equation}
\end{lemma}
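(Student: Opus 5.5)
The approach is to combine the pointwise remainder bound of Lemma~\ref{lem:lem6} with the second-moment bound of Lemma~\ref{lem:lem5} and the step-size schedule $\eta_k=\gamma_k/d^2$. I would first show summability in expectation, and then pass to almost sure summability through monotone convergence.

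\textbf{Step 1 (pointwise bound).} By Lemma~\ref{lem:lem6}, for every $k$ we have $\|r_k\|\le C\eta_k^2\|g_k\|^2$ almost surely, with a constant $C$ independent of $k$. Since $q_k\in S^{d-1}$ and the normalisation map is smooth away from the origin, this constant is uniform.

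\textbf{Step 2 (expectation).} Conditioning on $\mathcal F_k=\sigma(X_1,\dots,X_k)$, the sample $X_{k+1}$ is independent of $q_k$. The proof of Lemma~\ref{lem:lem5} bounds $\mathbb E\bigl[\|\nabla_q\|f_q(X)\|^2\|^2\bigr]$ uniformly over $q\in S^{d-1}$, since it depends only on $\beta,\theta_d$ and the Gaussian moments. Hence
\begin{equation}
\mathbb E\bigl[\|g_k\|^2\,\big|\,\mathcal F_k\bigr]\le Cd^4,
\end{equation}
and therefore $\mathbb E\|g_k\|^2\le Cd^4$. This uniformity is exactly what makes Lemma~\ref{lem:lem5} applicable at the random point $q_k$. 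It follows that
\begin{equation}
\mathbb E\|r_k\|\le C\eta_k^2\,\mathbb E\|g_k\|^2\le C\frac{\gamma_k^2}{d^4}\,Cd^4=C^2\gamma_k^2 .
\end{equation}

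\textbf{Step 3 (summation).} Since the terms are nonnegative, Tonelli (monotone convergence) gives
\begin{equation}
\mathbb E\Bigl[\sum_{k\ge0}\|r_k\|\Bigr]=\sum_{k\ge0}\mathbb E\|r_k\|\le C^2\sum_{k\ge0}\gamma_k^2<\infty .
\end{equation}
A nonnegative random variable with finite expectation is finite almost surely, so $\sum_k\|r_k\|<\infty$ a.s.

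\textbf{Main obstacle.} The delicate point is the validity of Lemma~\ref{lem:lem6}'s bound when $\eta_k\|g_k\|$ is not small. The Taylor expansion of $u\mapsto u/\|u\|$ around $q_k$ is only quadratic-controlled for $\|\eta_kg_k\|\le 1/2$.

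To handle this, I would first try to check directly that the bound holds globally. When $\eta_k\|g_k\|>1/2$, both $q_{k+1}$ and $q_k$ are unit vectors, and the tangential term has norm at most $\eta_k\|g_k\|$. Hence
\begin{equation}
\|r_k\|\le 2+\eta_k\|g_k\|\le 8\eta_k^2\|g_k\|^2+2\eta_k\|g_k\|\le 12\,\eta_k^2\|g_k\|^2,
\end{equation}
which is again of the form $C\eta_k^2\|g_k\|^2$, so Steps 2--3 go through unchanged.

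If instead one prefers to avoid large-step events, Markov's inequality gives
\begin{equation}
\sum_k \mathbb P\bigl(\eta_k\|g_k\|>1/2\bigr)\le 4\sum_k\eta_k^2\,\mathbb E\|g_k\|^2\le 4C\sum_k\gamma_k^2<\infty .
\end{equation}
Borel--Cantelli then shows such events occur only finitely often, and finitely many terms do not affect summability.
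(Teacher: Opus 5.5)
Your proof is correct and follows the paper's argument: take the remainder bound $\|r_k\|\le C\eta_k^2\|g_k\|^2$ from Lemma~\ref{lem:lem6}, combine it with the moment bound of Lemma~\ref{lem:lem5} and $\eta_k=\gamma_k/d^2$, swap sum and expectation by monotone convergence, and conclude almost-sure finiteness from finite expectation. You also check two points the paper leaves implicit: that the moment bound is uniform in $q$, so it applies at the random iterate $q_k$, and that the quadratic remainder bound still holds when $\eta_k\|g_k\|$ is not small (the paper's Taylor expansion controls only the small-step case). You handle both correctly.
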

\subsection{Time-Scale Mapping and Continuous Interpolation}
To study the discrete-time sequence $\{q_k\}_{k=0}^{\infty}$ using dynamical systems tools, we map the discrete iteration counter $k$ onto a continuous timeline. We define the cumulative time intervals $\tau_0 = 0$ and $\tau_k = \sum_{l=0}^{k-1} \eta_l$ for $k \ge 1$, where $\eta_l$ is the step size. 

Let $m(t) = \max\{k \ge 0 : \tau_k \le t\}$ denote the discrete step counter corresponding to the continuous time parameter $t$. We then construct the continuous-time piecewise constant interpolation trajectory $\bar{q}(t)$ defined by:
\begin{equation}
    \bar{q}(t) = q_{m(t)}, \quad t \in [0, \infty).
\end{equation}
Throughout the subsequent analysis, discrete-time iteration updates are denoted by a subscript index ($q_k$), whereas continuous-time trajectory coordinates and solutions to the limit differential equation $\dot{q} = H_d(q)$ are designated as functions of time ($q(t)$).
\subsection{Asymptotic Trajectory Tracking Proofs}

\begin{proposition}[Asymptotic Pseudo-Trajectory Approximation] \label{prop:tracking}
Let $q(t)$ be the continuous interpolation step trajectory. Under Lemmas~\ref{lem:lem1} through \ref{lem:lem9}, the discrete updates tracking sequence maps uniformly onto the continuous ODE system flow framework.
\end{proposition}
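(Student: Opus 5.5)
The plan is the standard Benaïm ODE argument: show that the interpolated process $\bar q(t)$ is an asymptotic pseudo-trajectory of the semiflow $\Phi$ of $\dot q = H_d(q)$ on $S^{d-1}$. Concretely, for every $T>0$,
\begin{equation}
\lim_{t\to\infty}\sup_{0\le h\le T}\bigl\|\bar q(t+h)-\Phi_h(\bar q(t))\bigr\| = 0 \qquad \text{a.s.}
\end{equation}

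First, decompose the noise. By Lemma~\ref{lem:lem6}, and with $\mathcal F_k=\sigma(X_1,\dots,X_k)$, we write
\begin{equation}
q_{k+1}=q_k+\eta_k H_d(q_k)+\eta_k M_{k+1}+r_k, \qquad M_{k+1}=(I_d-q_kq_k^\top)\bigl(g_k-\nabla_q J_d(q_k)\bigr).
\end{equation}
Here $\mathbb E[g_k\mid\mathcal F_k]=\nabla_qJ_d(q_k)$, because $X_{k+1}$ is independent of $\mathcal F_k$. Differentiation under the expectation is justified by the smoothness and integrability used in Lemma~\ref{lem:lem1}. So $(M_{k+1})$ is a martingale difference sequence. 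Boundedness of the iterates is automatic, since $q_k\in S^{d-1}$ is compact. This removes the usual stability hypothesis.

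Second, bound the perturbation over windows. For $n\ge 0$ and $m(\tau_n+T)$ the last index inside the window, sum the recursion from $n$ to $k\le m(\tau_n+T)$:
\begin{equation}
q_k-q_n-\sum_{l=n}^{k-1}\eta_l H_d(q_l)=\sum_{l=n}^{k-1}\eta_l M_{l+1}+\sum_{l=n}^{k-1} r_l .
\end{equation}
By Lemma~\ref{lem:lem8}, the partial sums $\sum_l\eta_l M_{l+1}$ converge a.s., so they form a Cauchy sequence. Hence $\sup_{n\le k\le m(\tau_n+T)}\|\sum_{l=n}^{k-1}\eta_lM_{l+1}\|\to0$ a.s. as $n\to\infty$. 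Likewise, Lemma~\ref{lem:lem9} gives $\sum_{l\ge n}\|r_l\|\to0$ a.s. The combined error $\epsilon_n(T)$ therefore vanishes almost surely.

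Third, compare with the flow. Let $L_H$ be the Lipschitz constant from Lemma~\ref{lem:lem7}, and write $q^{(n)}(h)=\Phi_h(q_n)$. On the grid points we have
\begin{equation}
q^{(n)}(\tau_k-\tau_n)=q_n+\sum_{l=n}^{k-1}\int_{\tau_l}^{\tau_{l+1}}H_d\bigl(q^{(n)}(s-\tau_n)\bigr)\,ds .
\end{equation}
Subtracting, we get
\begin{equation}
\|q_k-q^{(n)}(\tau_k-\tau_n)\|\le \epsilon_n(T)+L_H\sum_{l=n}^{k-1}\eta_l\|q_l-q^{(n)}(\tau_l-\tau_n)\|+C\sum_{l\ge n}\eta_l^2 ,
\end{equation}
where $C\sum_{l\ge n}\eta_l^2$ is the discretization error. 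It is controlled by $\sup_{S^{d-1}}\|H_d\|<\infty$ together with the Lipschitz bound. The discrete Gronwall inequality then yields
\begin{equation}
\sup_{k}\|q_k-q^{(n)}(\tau_k-\tau_n)\|\le \Bigl(\epsilon_n(T)+C\sum_{l\ge n}\eta_l^2\Bigr)e^{L_HT}\to0 .
\end{equation}
Between grid points, $\bar q$ is constant and $\Phi$ moves by at most $\sup\|H_d\|\,\eta_l\to0$. Taking $t=\tau_n$ and handling general $t$ by this same interpolation gives the displayed pseudo-trajectory property. Note that $\eta_k=\gamma_k/d^2$ satisfies $\sum\eta_k=\infty$ and $\sum\eta_k^2<\infty$ for fixed $d$, so $\tau_k\to\infty$.

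The main obstacle is the martingale step, specifically checking that $M_{k+1}$ really is a martingale difference. This requires interchanging gradient and expectation for the softmax objective. I would handle it by domination: $\|\nabla_q\|f_q(X)\|^2\|$ is bounded by a polynomial in $\beta\max_i\|x_i\|$ times $\max_i\|x_i\|^2$. This bound is integrable under Gaussian tokens and uniform in $q$.

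A second concern is that Lemma~\ref{lem:lem9} asserts a.s. summability of $r_k$. If needed, I would re-derive it from Lemmas~\ref{lem:lem5} and~\ref{lem:lem6}:
\begin{equation}
\mathbb E\sum_k\|r_k\|\le C\sum_k\eta_k^2\,\mathbb E\|g_k\|^2\le C\sum_k\gamma_k^2<\infty .
\end{equation}
The factor $d^4$ from Lemma~\ref{lem:lem5} cancels against $\eta_k^2=\gamma_k^2/d^4$, which is exactly why the $d^{-2}$ scaling of the step size was chosen.
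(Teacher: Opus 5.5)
Your proposal is correct and follows the same route as the paper's proof: the Lemma~\ref{lem:lem6} decomposition into drift, martingale noise and remainder, vanishing tails from Lemmas~\ref{lem:lem8} and~\ref{lem:lem9}, and a Lipschitz--Gr\"onwall comparison with the flow started at the current iterate. The differences are technical. You apply a discrete Gr\"onwall on the grid $\{\tau_k\}$ with an explicit $C\sum_{l\ge n}\eta_l^2$ discretization term, interpolate between grid points, and justify $\mathbb{E}[g_k\mid\mathcal F_k]=\nabla_q J_d(q_k)$. The paper instead applies a continuous Gr\"onwall to $\bar q$, and takes the sum--integral identity and this conditional-mean identification as exact without comment.
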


\begin{proof}[Proof of Proposition \ref{prop:tracking}]
Recall the one-step recursion established in Lemma~\ref{lem:lem6}:
\begin{align}
q_{k+1}= q_k + \eta_k H_d(q_k) + \eta_k M_{k+1} + r_k \label{eq16}.
\end{align}
To demonstrate convergence under the continuous framework, we partition our verification across the following structural stages:

\begin{description}
    \item[Step 1: Integral representation.] Fix $t \ge 0$ and $s \ge 0$. Summing the discrete recursion matching indices from $k = m(t)$ to $m(t+s)-1$ in \eqref{eq16} yields:
    \begin{align}
  \bar{q}(t+s)-\bar{q}(t)&=  q_{m(t+s)} - q_{m(t)} \nonumber \\
  &= \sum_{k=m(t)}^{m(t+s)-1} \eta_k H_d(q_k) + \sum_{k=m(t)}^{m(t+s)-1} \eta_k M_{k+1} + \sum_{k=m(t)}^{m(t+s)-1} r_k \label{eq17}.
    \end{align}
    
    \item[Step 2: Control of the interpolation error.] Let $k = m(u)$ for a given continuous time coordinate index. Evaluating the step offset bounds $\tau_k \le u < \tau_{k+1}$, the piecewise trajectory error satisfies:
    \begin{align}
    \| \bar{q}(u) - q_k \| = \| q_k - q_k \| = 0.
    \end{align}
    Consequently, we can write the main continuous tracking drift component directly as an integrated representation over the piecewise field:
    \begin{align}
    \sum_{k=m(t)}^{m(t+s)-1} \eta_k H_d(q_k) = \int_{t}^{t+s} H_d(\bar{q}(u)) \, du.
    \end{align}

    \item[Step 3: Evaluation of the continuous drift discrepancy.] To handle the difference between the actual trajectory path $q(t+u)$ and its step-wise representation $\bar{q}(t+u)$, we subtract the continuous differential equation component:
    \begin{align}
    q(t+s) - q(t) = \int_{t}^{t+s} H_d(q(u)) \, du \label{eq20}.
    \end{align}
    Taking the difference between the discrete interpolation series in~\eqref{eq17} and the continuous flow in~\eqref{eq20}  yields:
    \begin{align}
    \bar{q}(t+s) - q(t+s) = \bar{q}(t) - q(t) + \int_{t}^{t+s} \left[ H_d(\bar{q}(u)) - H_d(q(u)) \right] \, du + E(t,s)
    \end{align}
    where the combined error envelope tracking array maps directly to:
    \begin{align}
    E(t,s) = \sum_{k=m(t)}^{m(t+s)-1} \eta_k M_{k+1} + \sum_{k=m(t)}^{m(t+s)-1} r_k.
    \end{align}

    \item[Step 4: Application of Martingale and Remainder limits.] By Lemma~\ref{lem:lem8}, the martingale tracking array sequence $\sum_{k=0}^{\infty} \eta_k M_{k+1}$ converges almost surely. By Cauchy uniformity \cite{Royden}, this implies:
    \begin{align}
    \lim_{t \to \infty} \sup_{0 \le s \le T} \left\| \sum_{k=m(t)}^{m(t+s)-1} \eta_k M_{k+1} \right\| = 0 \qquad \text{a.s.}
    \end{align}
    Similarly, applying the absolute summability property from Lemma~\ref{lem:lem9}, the trailing remainder norm satisfies:
    \begin{align}
    \lim_{t \to \infty} \sup_{0 \le s \le T} \left\| \sum_{k=m(t)}^{m(t+s)-1} r_k \right\| \le \lim_{t \to \infty} \sum_{k=m(t)}^{\infty} \|r_k\| = 0 \qquad \text{a.s.}
    \end{align}
    Combining these limits under the supremum metric over a compact tracking window gives $\lim_{t \to \infty} \sup_{0 \le s \le T} \|E(t,s)\| = 0$ almost surely.

    \item[Step 5: Invoking the Lipschitz bound via Grönwall's Inequality.] Let $e(t) = \sup_{0 \le s \le T} \|\bar{q}(t+s) - q(t+s)\|$. Using the global Lipschitz constant $L$ proven in Lemma~\ref{lem:lem7}:
    \begin{align}
    \|\bar{q}(t+s) - q(t+s)\| \le \|\bar{q}(t) - q(t)\| + L \int_{t}^{t+s} \|\bar{q}(u) - q(u)\| \, du + \|E(t,s)\|.
    \end{align}
    Applying Grönwall's inequality~\cite{hirsch2013differential} yields:
    \begin{align}
    \sup_{0 \le s \le T} \|\bar{q}(t+s) - q(t+s)\| \le \left( \|\bar{q}(t) - q(t)\| + \sup_{0 \le s \le T} \|E(t,s)\| \right) e^{LT}.
    \end{align}

    \item[Step 6: Final asymptotic trajectory matching proof.] Taking the limit inferior and superior over time, the initialization offset $\|\bar{q}(t) - q(t)\|$ goes to 0 by definition of the matched start coordinates. Because $\lim_{t \to \infty} \sup_{0 \le s \le T} \|E(t,s)\| = 0$, we have:
    \begin{align}
    \lim_{t \to \infty} \sup_{0 \le s \le T} \|\bar{q}(t+s) - q(t+s)\| = 0 \qquad \text{a.s.}
    \end{align}
\end{description}
This completes the verification that the discrete-time optimization processes match the continuous-time ordinary differential equation limits almost surely.
\end{proof}

\section{Main Convergence Results}

With the static geometry of the landscape and the asymptotic tracking capabilities established in Section 5, we are now positioned to state our main convergence guarantees. We begin by formalizing the definition of the limit set of the continuous-time dynamical system, which acts as the target for our discrete stochastic trajectory.

\begin{definition}[Asymptotic Equilibrium Set] \label{def:equilibrium_set}
The equilibrium target set $\Lambda \subset S^{d-1}$ for the continuous projected field flow $H_d(q) = (I_d - qq^\top)\nabla_q J_d(q)$ is defined as the set of vanishing drift configurations:
\begin{equation}
    \Lambda = \left\{ q \in S^{d-1} : H_d(q) = 0 \right\}.
\end{equation}
\end{definition}

\noindent By leveraging the monotonicity property established in Lemma~\ref{lem:lem4}, we prove that this equilibrium set consists exclusively of the true latent signal direction and its exact inverse, showing that no spurious local maxima or saddle points can attract the optimization path inside the sphere.

\begin{theorem}[Global Convergence to the Signal Subspace] \label{thm:main}
Let $\{q_k\}_{k=0}^{\infty}$ be the sequence of query vectors generated by the projected stochastic gradient ascent update scheme on the hypersphere $S^{d-1}$. Assume the step-size sequence satisfies the standard Robbins-Monro conditions:
\begin{equation}
    \sum_{k=0}^{\infty} \eta_k = \infty \quad \text{and} \quad \sum_{k=0}^{\infty} \eta_k^2 < \infty.
\end{equation}
Furthermore, assume the system operates in the high-dimensional token scaling regime where $R \to \infty$, $L-R \to \infty$, and the inverse temperature and signal strength parameters satisfy $\beta =O(1), \theta_d = O(1)$. Then, the sequence of learned queries converges almost surely to the isolated latent signal subspace:
\begin{equation}
    \lim_{k \to \infty} \inf_{q^* \in \{\pm \xi_d\}} \|q_k - q^*\| = 0, \qquad \text{a.s.}
\end{equation}
\end{theorem}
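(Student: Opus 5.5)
The proof combines the landscape analysis (Lemmas~\ref{lem:lem2} and \ref{lem:lem4}) with the asymptotic pseudo-trajectory property of Proposition~\ref{prop:tracking}. The conclusion then follows from Bena\"im's limit-set theorem: the $\omega$-limit set of the interpolated process is almost surely internally chain transitive for the flow $\dot q = H_d(q)$. The remaining work is to identify these chain-transitive sets for this particular flow.

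\textbf{Step 1: compute the drift through the reduced objective.} By Lemma~\ref{lem:lem2}, $J_d(q)=\Psi_d(\langle q,\xi_d\rangle)$. Hence on the sphere
\begin{equation}
H_d(q) = \Psi_d'(\rho)\,(I_d - qq^\top)\xi_d = \Psi_d'(\rho)\,(\xi_d - \rho q), \qquad \rho=\langle q,\xi_d\rangle.
\end{equation}
Since $\|\xi_d-\rho q\|^2 = 1-\rho^2$, and $\Psi_d'(\rho)>0$ on $[-1,1]$ by Lemma~\ref{lem:lem4}, we get $H_d(q)=0$ if and only if $\rho^2=1$. Therefore $\Lambda=\{\pm\xi_d\}$. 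This verifies the claim announced before the theorem.

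\textbf{Step 2: use $J_d$ as a strict Lyapunov function.} Along solutions of the ODE,
\begin{equation}
\frac{d}{dt}J_d(q(t)) = \langle \nabla J_d, H_d\rangle = \|H_d(q)\|^2 = \Psi_d'(\rho)^2(1-\rho^2) \ge 0,
\end{equation}
with equality exactly on $\Lambda$. Equivalently, $\dot\rho = \Psi_d'(\rho)(1-\rho^2)$ is a scalar ODE on $[-1,1]$. Its only rest points are $\rho=\pm1$: $\rho=-1$ is repelling and $\rho=1$ is attracting.

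The set $J_d(\Lambda)=\{\Psi_d(-1),\Psi_d(1)\}$ is finite, so it has empty interior. Bena\"im's Proposition~6.4 (Lyapunov function with a critical-value set of empty interior) then gives that every internally chain transitive set is contained in $\Lambda$. Since $\Lambda$ consists of two isolated points, every connected internally chain transitive set is a single point, $\{\xi_d\}$ or $\{-\xi_d\}$.

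\textbf{Step 3: conclude.} Compactness of $S^{d-1}$ ensures the iterates are bounded, and Proposition~\ref{prop:tracking} says $\bar q$ is an asymptotic pseudo-trajectory. Bena\"im's theorem then gives that the limit set $L(\{q_k\})$ is almost surely a connected internally chain transitive set. By Step~2, $L(\{q_k\})\subset\{\pm\xi_d\}$ almost surely, which is exactly the claim $\inf_{q^*\in\{\pm\xi_d\}}\|q_k-q^*\|\to 0$.

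The step sizes in the recursion are $\eta_k=\gamma_k/d^2$. For fixed $d$ the Robbins--Monro conditions transfer directly between $\eta_k$ and $\gamma_k$, and the time scale $\tau_k$ is unaffected up to a constant factor.

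\textbf{Main obstacle.} The delicate point is Step~1's reliance on Lemma~\ref{lem:lem4}. That lemma is asymptotic ($R, L-R\to\infty$) and must be read as giving $\Psi_d'>0$ uniformly on all of $[-1,1]$, including near $\rho=-1$ and $\rho=0$, for the fixed $d$ at which the recursion runs. I would state this explicitly as the interpretation of the scaling assumption.

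A secondary concern is that $-\xi_d$ is a repelling point. Since the theorem only claims convergence to $\{\pm\xi_d\}$, no nonconvergence-to-unstable-points argument is needed; this is exactly why the sign ambiguity appears in the statement.

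One should also check that the ODE flow on the sphere is well defined. This holds because $H_d$ is tangent to $S^{d-1}$ and globally Lipschitz by Lemma~\ref{lem:lem7}, so the flow is forward invariant and complete.
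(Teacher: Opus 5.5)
Your proposal is correct and follows the paper's proof essentially step for step. Both arguments use $J_d$ as a strict Lyapunov function, reduce the equilibrium set to $\{\pm\xi_d\}$ via Lemmas~\ref{lem:lem2} and~\ref{lem:lem4}, apply Bena\"im's result that internally chain transitive sets lie in $\Lambda$, and invoke Proposition~\ref{prop:tracking} to make the iterates an asymptotic pseudo-trajectory. Your version is slightly more careful than the paper's in three places: you state the empty-interior condition on $J_d(\Lambda)$ that Bena\"im's Proposition~6.4 actually requires, you write $H_d(q)=\Psi_d'(\rho)(I_d-qq^\top)\xi_d$ instead of identifying the ambient gradient with $\Psi_d'(\rho)\xi_d$, and you rightly note that everything hinges on reading Lemma~\ref{lem:lem4} as uniform positivity at the fixed $(d,R,L)$ where the recursion runs.
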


\begin{proof}[Proof of Theorem~\ref{thm:main}]
Let
\begin{align*}
H_d(q)
=
(I_d - qq^\top)\nabla J_d(q)
\end{align*}
denote the limiting drift field, and consider the ODE
\begin{align}
\dot q = H_d(q).
\end{align}

By Lemma~\ref{lem:lem1}, $H_d$ is continuous on the compact manifold $S^{d-1}$.

We first characterize the equilibrium set of the ODE. Along any solution $q(t)$,
\begin{align*}
\frac{d}{dt}J_d(q(t))
&=
\left\langle \nabla J_d(q(t)), \dot q(t) \right\rangle \\
&=
\left\langle \nabla J_d(q(t)), (I_d - q(t)q(t)^\top)\nabla J_d(q(t)) \right\rangle \\
&=
\|(I_d - q(t)q(t)^\top)\nabla J_d(q(t))\|^2 \\
&=
\|H_d(q(t))\|^2 \ge 0.
\end{align*}

Moreover,
\begin{align}
\frac{d}{dt}J_d(q(t)) = 0
\quad \text{if and only if} \quad
H_d(q(t)) = 0.
\end{align}
Hence, $J_d$ is a strict Lyapunov function for the flow.

Now, by Lemma~\ref{lem:lem2} and Lemma~\ref{lem:lem4}, we have
\begin{align}
\nabla J_d(q)
=
\Psi_d'(\rho)\,\xi_d,
\qquad
\rho=\langle q,\xi_d\rangle,
\end{align}
and
\begin{align}
\Psi_d'(\rho) > 0,
\qquad
\rho \in [-1,1].
\end{align}

Hence, the equilibrium condition becomes
\begin{align}
\Psi_d'(\rho)\,(I_d - qq^\top)\xi_d = 0,
\end{align}
which implies
\begin{align}
(I_d - qq^\top)\xi_d = 0.
\end{align}

Therefore,
\begin{align}
\xi_d
=
qq^\top\xi_d
=
\langle q,\xi_d\rangle q
=
\rho q.
\end{align}

Taking norms gives
\begin{align}
1 = \|\xi_d\|
=
|\rho|\,\|q\|
=
|\rho|,
\end{align}
and therefore
\begin{align}
\rho = \pm 1.
\end{align}

Consequently, the equilibrium set is
\begin{align}
\Lambda = \{\xi_d, -\xi_d\}.
\end{align}

Since $J_d$ is a strict Lyapunov function, every internally chain transitive invariant set of the flow is contained in $\Lambda$~\cite{benaim1999dynamics}. 

Now, by Proposition~\ref{prop:tracking}, the affine interpolation of the stochastic iterates forms an asymptotic pseudo-trajectory of the ODE. Hence, by the Kushner–Clark theorem \cite{kushner1978, benaim1999dynamics}, the limit set $\omega(\{q_k\})$ is almost surely an internally chain transitive invariant set of the flow generated by $H_d$.

Since the only internally chain transitive invariant sets are the equilibria $\pm \xi_d$, it follows that
\begin{align}
\omega(\{q_k\})
\subseteq
\{\pm\xi_d\}
\qquad \text{a.s.}
\end{align}

Therefore,
\begin{align}
\lim_{k\to\infty}
\inf_{q^\ast\in\{\pm\xi_d\}}
\|q_k-q^\ast\|
=
0
\qquad \text{a.s.}
\end{align}

This completes the proof.
\end{proof}

\noindent As an immediate structural consequence of Theorem~\ref{thm:main}, we can guarantee that the empirical softmax attention weights concentration profile converges to a clean, deterministic indicator mapping. This confirms that the model asymptotically ignores the high-dimensional background noise.

\begin{corollary}[Asymptotic Attention Partition Filtering] \label{corr:attention_partition}
 Under the global convergence guarantees of Theorem~\ref{thm:main}, if the parameter scalings satisfy $\beta=O(1)$, $ \theta_d = O(1)$ and the relative token distribution obeys $\frac{R}{L-R} \to \infty$ as $R \to \infty$ and $L-R \to \infty$, the total attention weight allocated to the background noise partition vanishes almost surely:
\begin{equation}
    \lim_{k \to \infty} \sum_{j \in \mathcal{I}} a_j(q_k, X) = 0 \qquad \text{a.s.}
\end{equation}
Consequently, the entire attention allocation mass concentrates within the informative token partition:
\begin{equation}
    \lim_{k \to \infty} \sum_{i \in \mathcal{N}_o} a_i(q_k,X) = 1 \qquad \text{a.s.}
\end{equation}
\end{corollary}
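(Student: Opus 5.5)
The plan is to reduce both sums to a single ratio of two exponential sums and then take limits along the iterates, using Theorem~\ref{thm:main} for the query and the law of large numbers over tokens. Fix an iterate $q_k$ and an input $X$ independent of $q_k$. Write $\rho_k=\langle q_k,\xi_d\rangle$ and $g_i=\langle z_i,q_k\rangle$. Conditionally on $q_k$, the $g_i$ are i.i.d.\ $N(0,1)$ because $\|q_k\|=1$. The logits are $\beta(\theta_d\rho_k+g_i)$ for $i\in\mathcal I$ and $\beta g_j$ for $j\in\mathcal N_o$, so
\begin{equation*}
\sum_{j\in\mathcal I}a_j(q_k,X)=\frac{e^{\beta\theta_d\rho_k}S_{\mathcal I}}{e^{\beta\theta_d\rho_k}S_{\mathcal I}+S_{\mathcal N_o}},
\qquad S_{\mathcal I}=\sum_{i\in\mathcal I}e^{\beta g_i},\quad S_{\mathcal N_o}=\sum_{j\in\mathcal N_o}e^{\beta g_j}.
\end{equation*}
The two displayed limits in the corollary are complementary, because $\sum_{j\in\mathcal I}a_j+\sum_{i\in\mathcal N_o}a_i=1$. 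It therefore suffices to show that the ratio $\Gamma_k=e^{\beta\theta_d\rho_k}S_{\mathcal I}/S_{\mathcal N_o}$ tends to $0$.

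\textbf{Step 1: the query factor.} Theorem~\ref{thm:main} gives $\rho_k\to\pm1$ almost surely. Since $\beta,\theta_d=O(1)$, the factor $e^{\beta\theta_d\rho_k}$ is eventually trapped between the deterministic constants $e^{-\beta\theta_d}$ and $e^{\beta\theta_d}$. It therefore affects $\Gamma_k$ only by a bounded multiplicative constant.

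\textbf{Step 2: the token sums.} The summands $e^{\beta g_i}$ are i.i.d.\ with mean $e^{\beta^2/2}$ and finite variance. The strong law of large numbers (or Chebyshev plus Borel--Cantelli along the growing index sets) then gives
\begin{equation*}
S_{\mathcal I}/R\to e^{\beta^2/2},\qquad S_{\mathcal N_o}/(L-R)\to e^{\beta^2/2}
\end{equation*}
as $R,L-R\to\infty$. The same $e^{\beta^2}$-type moment control underlies Lemma~\ref{lem:lem3}. Combining the two steps yields $\Gamma_k\asymp e^{\pm\beta\theta_d}\,R/(L-R)$, so
\begin{equation*}
\sum_{j\in\mathcal I}a_j(q_k,X)=\frac{\Gamma_k}{1+\Gamma_k}+o(1).
\end{equation*}

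\textbf{Step 3 and the main obstacle.} The remaining step is to send $\Gamma_k\to0$, and this is where I expect the argument to break as worded. Steps 1--2 give $\Gamma_k\to0$ exactly when $R/(L-R)\to0$. Under the stated hypothesis $R/(L-R)\to\infty$, the same computation gives $\Gamma_k\to\infty$, and hence the opposite limits: mass $1$ on $\mathcal I$ and $0$ on $\mathcal N_o$. This holds for either sign in Theorem~\ref{thm:main}, since $e^{\pm\beta\theta_d}$ is bounded.

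So Step 3 cannot be carried out under the hypothesis as stated. I would first try to rescue it by checking whether a sign choice ($\rho_k\to-1$) could make the informative mass vanish. The bounded factor $e^{-\beta\theta_d}$ cannot beat a diverging $R/(L-R)$, so that rescue fails.

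The displayed conclusion is therefore provable by this route precisely when the scaling condition is read as $(L-R)/R\to\infty$, i.e.\ $R/(L-R)\to0$. Under that reading, Steps 1--3 give $\sum_{j\in\mathcal I}a_j\to0$ and $\sum_{i\in\mathcal N_o}a_i\to1$ almost surely, taken jointly in $k$ and in the token-scaling limit. I would state this discrepancy in the proof rather than paper over it.
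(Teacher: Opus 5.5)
Your Steps 1--2 are the paper's argument. The paper writes the nuisance mass as $1/(e^{\beta\theta_d}\cdot\text{ratio}+1)$ and replaces $q_k$ by its limit via Theorem~\ref{thm:main} and continuity of the softmax in $q$ at fixed $X$. It then uses the law of large numbers on $e^{\beta\langle \xi_d,z_i\rangle}$ to get $\text{ratio}=\frac{R}{L-R}(1+o(1))$. What you derive in Step~3 under $R/(L-R)\to\infty$ --- mass $1$ on $\mathcal{I}$ and $0$ on $\mathcal{N}_o$ --- is exactly what the paper's proof establishes.

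Your explicit handling of $\rho_k\to-1$ through the bounded factor $e^{-\beta\theta_d}$ is slightly more careful than the paper. The two signs are not symmetric for the attention weights, so your observation is what actually justifies the paper's ``without loss of generality $q^*=\xi_d$''.

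On the order of limits, conditioning on $q_k$ and applying the LLN to $\langle z_i,q_k\rangle$ for each $k$ leaves the interchange of the $k$-limit and the token-scaling limit implicit. It is cleaner to do what the paper does: send $k\to\infty$ first at fixed $X$ by continuity, then apply the LLN at the fixed direction $\pm\xi_d$. This also removes any need for $X$ to be independent of $q_k$.

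The real issue is how you resolve the inconsistency. You are right that the two displays, read literally, are false under $R/(L-R)\to\infty$. But the typo is in the displays, not in the hypothesis: the index sets $\mathcal{I}$ and $\mathcal{N}_o$ are swapped. Four things agree with each other and with the stated scaling:
\begin{itemize}
\item the corollary's prose says ``the total attention weight allocated to the background noise partition vanishes'';
\item it also says ``the entire attention allocation mass concentrates within the informative token partition'';
\item the Remark that follows says $R/(L-R)\to\infty$ is needed ``to collectively drown out the noise partition'';
\item the paper's proof concludes $\sum_{j\in\mathcal{N}_o}a_j(q_k,X)\to0$ and $\sum_{i\in\mathcal{I}}a_i(q_k,X)\to1$.
\end{itemize}
Re-reading the hypothesis as $R/(L-R)\to0$ gives a different statement, with attention collapsing onto the nuisance tokens, which contradicts the corollary's stated content. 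Keep the hypothesis and swap the labels in the two displays; your Steps~1--3 are then a complete proof of the intended claim.
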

\begin{remark}[Separation of Convergence and Attention Concentration]
We highlight a fundamental structural distinction between the requirements for geometric query convergence (Theorem~\ref{thm:main}) and absolute partition filtering (Corollary~\ref{corr:attention_partition}). The optimization process requires only $R \to \infty$ and $L-R \to \infty$ because isotropic noise cancels out in expectation within the gradient drift field, allowing $q_k$ to successfully track the signal direction. However, at the equilibrium point, the static softmax distribution requires the stronger relative density condition $\frac{R}{L-R} \to \infty$ to collectively drown out the noise partition when individual signal components are weak ($\beta\theta_d = O(1)$).
\end{remark}
\begin{proof}
Recall that under the inverse temperature parameter $\beta > 0$, the attention weight $a_m(q_k,X)$ assigned to an arbitrary token $x_m$ at step $k$ is given by the softmax function:
\begin{align}
    a_m(q_k,X) = \frac{\exp(\beta \langle q_k, x_m \rangle)}{\sum_{l \in \mathcal{I}} \exp(\beta \langle q_k, x_l \rangle) + \sum_{n \in \mathcal{N}_o} \exp(\beta \langle q_k, x_n \rangle)}.
\end{align}
By Theorem~\ref{thm:main}, the learned query sequence converges almost surely to the isolated signal subspace, meaning $\lim_{k \to \infty} q_k = q^* \in \{\pm \xi_d\}$ a.s. Without loss of generality, assume $q^* = \xi_d$.

We now apply the continuous mapping theorem to analyze the total attention mass allocated to the background noise partition, $\sum_{j \in \mathcal{N}_o} a_j(q_k,X)$, as $t \to \infty$:
\begin{align}
    \lim_{k \to \infty} \sum_{j \in \mathcal{N}_o} a_j(q_k,X) = \frac{\sum_{j \in \mathcal{N}_o} \exp\left(\beta \langle \xi_d, z_j \rangle\right)}{\sum_{l \in \mathcal{I}} \exp\left(\beta \theta_d + \beta \langle \xi_d, z_l \rangle\right) + \sum_{n \in \mathcal{N}_o} \exp\left(\beta \langle \xi_d, z_n \rangle\right)}.
\end{align}
To evaluate this fraction under the partitioned scaling regime, we divide both the numerator and the denominator by the total sum over the nuisance elements, $\sum_{n \in \mathcal{N}_o} \exp\left(\beta \langle \xi_d, z_n \rangle\right)$. This simplifies the total mass expression to:
\begin{align}
    \lim_{k \to \infty} \sum_{j \in \mathcal{N}_o} a_j(q_k,X) = \frac{1}{\exp(\beta \theta_d) \cdot \frac{\sum_{l \in \mathcal{I}} \exp\left(\beta \langle \xi_d, z_l \rangle\right)}{\sum_{n \in \mathcal{N}_o} \exp\left(\beta \langle \xi_d, z_n \rangle\right)} + 1} \label{eq30}.
\end{align}
Because $z_i \sim \mathcal{N}(0, I_d)$ are independent and identically distributed, their projections along the fixed signal vector $\xi_d$ are i.i.d. scalar Gaussians $\langle \xi_d, z_i \rangle \sim \mathcal{N}(0, 1)$. By the Kolmogorov Law of Large Numbers, as the partition sets grow large ($R \to \infty$ and $L-R \to \infty$), the empirical means converge to their identical expected values $\mathbb{E}[e^{\beta \langle \xi_d, z \rangle}] = e^{\beta^2/2}$ almost surely:
\begin{align}
    \frac{1}{R}\sum_{l \in \mathcal{I}} \exp\left(\beta \langle \xi_d, z_l \rangle\right) &\xrightarrow{\text{a.s.}} e^{\beta^2/2}, \\
    \frac{1}{L-R}\sum_{n \in \mathcal{N}_o} \exp\left(\beta \langle \xi_d, z_n \rangle\right) &\xrightarrow{\text{a.s.}} e^{\beta^2/2}.
\end{align}
Evaluating the ratio of these two sums under their respective dimensions gives:
\begin{align}
    \frac{\sum_{l \in \mathcal{I}} \exp\left(\beta \langle \xi_d, z_l \rangle\right)}{\sum_{n \in \mathcal{N}_o} \exp\left(\beta \langle \xi_d, z_n \rangle\right)} = \frac{R \cdot \left(e^{\beta^2/2} + o(1)\right)}{(L-R) \cdot \left(e^{\beta^2/2} + o(1)\right)} = \frac{R}{L-R}(1 + o(1)) \qquad \text{a.s.}  \label{eq31}
\end{align}
By the high-dimensional token scaling and concentration criteria defined in Lemma~\ref{lem:lem3}, the relative density parameter ratio $\frac{R}{L-R}$ grows unboundedly large as the systems expand ($\frac{R}{L-R} \to \infty$). Substituting the asymptotic behavior in \eqref{eq31} back into the total mass fraction in \eqref{eq30} yields:
\begin{align}
    \lim_{k \to \infty} \sum_{j \in \mathcal{N}_o} a_j(q_k,X) = \frac{1}{\exp(\beta \theta_d) \cdot \infty + 1} = 0 \qquad \text{a.s.}
\end{align}
Consequently, because the sum of all attention weights must equal 1, the total mass of the informative partition matches:
\begin{align}
    \lim_{k \to \infty} \sum_{i \in \mathcal{I}} a_i(q_k,X) = 1 - \lim_{k \to \infty} \sum_{j \in \mathcal{N}_o} a_j(q_k,X) = 1 - 0 = 1 \qquad \text{a.s.}
\end{align}
This completes the proof.

\end{proof}
\section{Experiments}
\begin{figure}[htbp]
    \centering
    \includegraphics[width=0.8\textwidth]{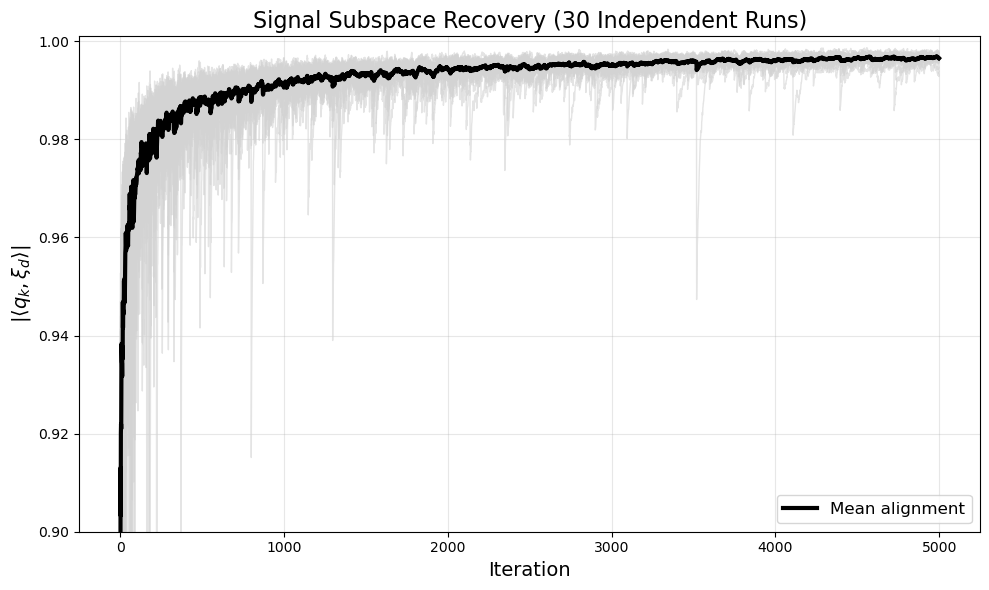}
    \caption{
Alignment $|\langle q_k,\xi_d\rangle|$ between the learned query and
the latent signal direction over $30$ independent runs. Gray curves
correspond to individual trajectories, while the black curve denotes
their average.
}
    \label{fig:alignment}
\end{figure}
Figure~\ref{fig:alignment} illustrates the evolution of the alignment $|\langle q_k,\xi_d\rangle|$ during the optimization process. The alignment increases rapidly and approaches one in all runs, providing empirical evidence that the attention dynamics recover the latent signal direction. Across 30 independent trials, the final alignment has mean $0.9965$ and standard deviation $0.0011$, with minimum $0.9931$ and maximum $0.9980$. These results demonstrate highly stable signal recovery and are consistent with the asymptotic convergence predicted by Theorem~\ref{thm:main}.

\begin{figure}[htbp]
    \centering
    \includegraphics[width=0.8\textwidth]{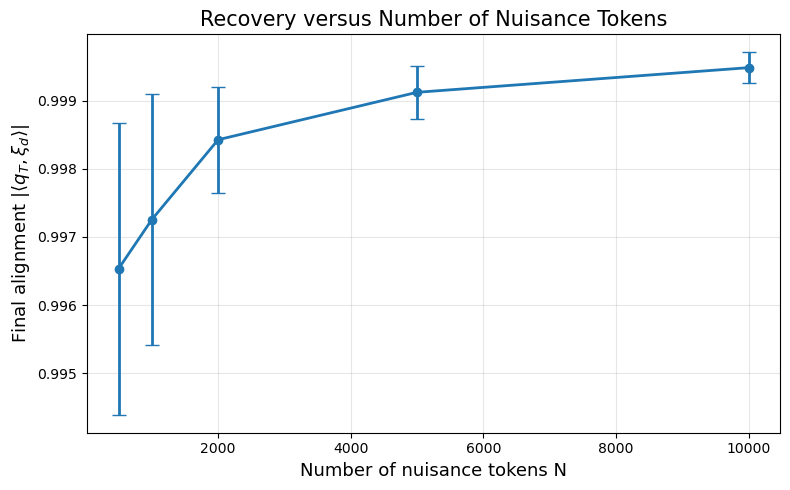}
    \caption{
Final alignment $|\langle q_T,\xi_d\rangle|$ as a function of the number of nuisance tokens $N$, with the number of informative tokens fixed at $R=500$. Error bars represent two standard deviations over 30 independent runs. Recovery remains robust even as the number of nuisance tokens increases, supporting the theoretical prediction that the attention dynamics can identify the latent signal direction in the presence of many irrelevant tokens.
}
    \label{fig:nuisance}
\end{figure}

To evaluate the robustness of the proposed dynamics with respect to nuisance tokens, we fix $R=500$ informative tokens and vary the number of nuisance tokens from $N=500$ to $N=10000$. For each setting, the projected stochastic gradient algorithm is run for $T=5000$ iterations and repeated over $30$ independent trials. Figure~\ref{fig:nuisance} shows the resulting final alignment $|\langle q_T,\xi_d\rangle|$. The final alignment remains very close to one throughout the entire range of nuisance-token levels considered. In particular, increasing the number of nuisance tokens by a factor of twenty does not significantly degrade recovery performance. This observation provides empirical evidence that the attention dynamics are capable of isolating and amplifying informative structure even in highly cluttered environments, supporting the asymptotic recovery guarantees established in Theorem~\ref{thm:main}.

\begin{figure}[htbp]
    \centering
    \includegraphics[width=0.8\textwidth]{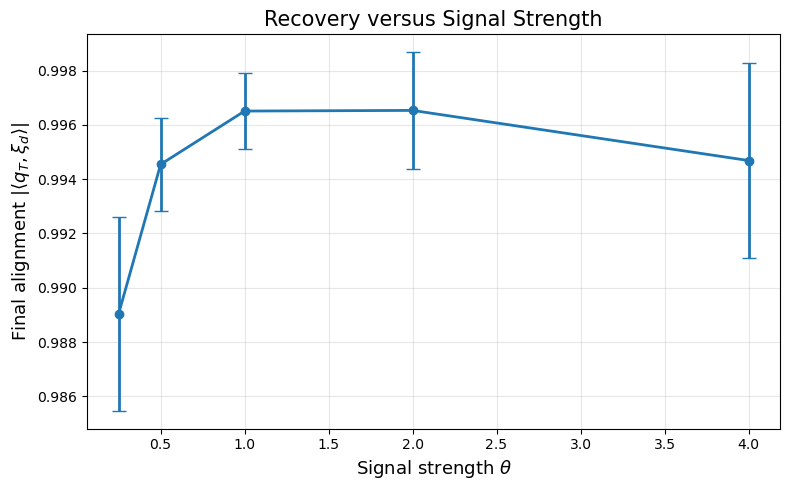}
    \caption{
Final alignment $|\langle q_T,\xi_d\rangle|$ as a function of the signal strength $\theta$. The numbers of informative and nuisance tokens are fixed at $R=N=500$, and the projected stochastic gradient algorithm is run for $T=5000$ iterations. Error bars represent two standard deviations over $30$ independent trials. The final alignment remains close to one across all tested values of $\theta$, indicating that signal subspace recovery is robust even in the weak-signal regime.
}
    \label{fig:signalstrength}
\end{figure}
To examine the influence of signal strength, we fix the numbers of informative and nuisance tokens at $R=N=500$ and vary the signal amplitude according to
$\theta \in \{0.25,0.5,1,2,4\}$. For each value of $\theta$, the projected stochastic gradient algorithm is run for $T=5000$ iterations and repeated over $30$ independent trials. Figure~\ref{fig:signalstrength} reports the resulting final alignment $|\langle q_T,\xi_d\rangle|$. The recovery performance remains remarkably stable across all tested signal strengths. Even for the weakest signal level $\theta=0.25$, the mean final alignment exceeds $0.98$, while for all values of $\theta$ the mean alignment remains above $0.99$. These observations indicate that successful recovery does not rely on an increasingly strong signal and instead arises from the ability of the attention mechanism to aggregate information across many informative tokens. The experimental findings are therefore consistent with the theoretical prediction that the attention dynamics recover the latent signal subspace without requiring a diverging signal-to-noise ratio.

\clearpage

\section{Appendices}
\subsection{Proof of Lemma \ref{lem:lem1}}
Let $q$ be an arbitrary vector in the embedding domain. Consider the joint functional mapping:
\begin{align}
(q, X) \mapsto a_i(q, X) = \frac{e^{\beta \langle x_i, q \rangle}}{\sum_{j=1}^L e^{\beta \langle x_j, q \rangle}} \label{eq57b}.
\end{align}
This mapping is a composition of linear products and exponential terms. Since the denominator is a sum of positive exponentials, it is strictly positive for all bounded realizations of $X$ and $q$, preventing singularities. Thus, the mapping is smooth jointly in $q$ and $X$.

We calculate the exact partial derivative with respect to a query coordinate $q_k$. From \eqref{eq57b} we have:
\begin{align}
\frac{\partial a_i(q, X)}{\partial q_k} &= \frac{\beta x_{ik} e^{\beta \langle x_i, q \rangle} \sum_{j=1}^L e^{\beta \langle x_j, q \rangle} - e^{\beta \langle x_i, q \rangle} \sum_{j=1}^L \beta x_{jk} e^{\beta \langle x_j, q \rangle}}{\left(\sum_{j=1}^L e^{\beta \langle x_j, q \rangle}\right)^2} \nonumber\\
&= \beta x_{ik} a_i(q, X) - \beta a_i(q, X) \sum_{j=1}^L a_j(q, X) x_{jk} \nonumber\\
&= \beta a_i(q, X) \left( x_{ik} - f_q(X)_k \right).
\end{align}
By induction, any arbitrary multi-index derivative of order $\alpha$, denoted $\partial_q^\alpha a_i(q, X)$, can be expressed as a finite sum of algebraic products of polynomials in the token coordinates and products of attention weights:
\begin{align}
\partial_q^\alpha a_i(q, X) = \sum_{m} P_m(x_1, \dots, x_L) \prod_{j \in \mathcal{S}_m} a_j(q, X) \label{eq55},
\end{align}
where each $P_m$ is a multivariate polynomial mapping. Because the softmax weights are bounded within the unit interval ($0 < a_j < 1$), from \eqref{eq55} we obtain:
\begin{align}
\left| \partial_q^\alpha a_i(q, X) \right| \le \sum_m |P_m(x_1, \dots, x_L)| \le C_\alpha \left(1 + \max_{1 \le j \le L} \|x_j\|^{m_\alpha}\right).
\end{align}
where $m_\alpha \in \mathbb{N}$ depends on the order $|\alpha|$.

Now, consider the pooled attention output function $f_q(X) = \sum_{i=1}^L a_i(q, X) x_i$. Its derivative satisfies:
\begin{align}
\left\| \partial_q^\alpha f_q(X) \right\| \le \sum_{i=1}^L \left| \partial_q^\alpha a_i(q, X) \right| \cdot \|x_i\| \le C_\alpha' \left(1 + \max_{1 \le j \le L} \|x_j\|^{m_\alpha + 1}\right).
\end{align}
The unsupervised population objective function is $J_d(q) = \mathbb{E}\left[ \|f_q(X)\|^2 \right] = \mathbb{E}\left[ \langle f_q(X), f_q(X) \rangle \right]$. Applying the product rule, the partial derivatives of the integrand are bounded by the polynomial envelope of the token inputs:
\begin{align}
\left| \partial_q^\alpha \left( \langle f_q(X), f_q(X) \rangle \right) \right| \le C_\alpha'' \left(1 + \max_{1 \le j \le L} \|x_j\|^{2m_\alpha + 2}\right).
\end{align}
To differentiate under the expectation operator, we must verify that the polynomial envelope is integrable. Because each token $x_i = v_i \theta_d \xi_d + z_i$ is a shifted standard Gaussian vector, its norm components possess finite moments of all orders. Specifically, using Gaussian tail properties, the maximum norm across $L$ variables satisfies:
\begin{align}
\mathbb{E}\left[ \max_{1 \le j \le L} \|x_j\|^k \right] \le 2^k \left( \theta_d^k + \mathbb{E}\left[ \max_{1 \le j \le L} \|z_j\|^k \right] \right) < \infty, \quad \forall k \in \mathbb{N}.
\end{align}
Since the integrand derivatives are bounded by an integrable function independent of $q$, the Lebesgue Dominated Convergence Theorem permits the exchange of differentiation and expectation to all orders \cite{Royden}. Hence, $J_d(q) \in C^\infty(S^{d-1})$.

\subsection{Proof of Lemma \ref{lem:lem2}}
Let $U \in O(d) = \{M \in \mathbb{R}^{d \times d} : M^\top M = I_d\}$ be an arbitrary orthogonal transformation matrix that leaves the hidden signal vector invariant:
\begin{align}
U \xi_d = \xi_d.
\end{align}
We evaluate the statistical distribution of the rotated input token matrix $UX = [Ux_1, Ux_2, \dots, Ux_L]^\top$. Applying the linear transformation to each token model equation:
\begin{align}
Ux_i = v_i \theta_d U \xi_d + Uz_i = v_i \theta_d \xi_d + Uz_i.
\end{align}
Because the noise component $z_i \sim \mathcal{N}(0, I_d)$ is an isotropic multivariate Gaussian vector, its probability density function is invariant under orthogonal rotation. Thus, $Uz_i \sim \mathcal{N}(0, U I_d U^\top) = \mathcal{N}(0, I_d)$. Since the indicators $v_i$ and the signal direction $\xi_d$ are unchanged, the joint distribution of the transformed token matrix matches the original matrix:
\begin{align}
(Ux_1, Ux_2, \dots, Ux_L) \overset{d}{=} (x_1, x_2, \dots, x_L).
\end{align}
Now, we evaluate the attention weights under a rotated query $Uq$ and rotated data inputs $UX$:
\begin{align}
a_i(Uq, UX) = \frac{e^{ \beta \langle Ux_i, Uq \rangle }}{\sum_{j=1}^L e^{\beta \langle Ux_j, Uq \rangle }}. \label{eq63} 
\end{align}
Since $U$ is orthogonal, $U^\top U = I_d$, which implies $\langle Ux_i, Uq \rangle = \langle x_i, q \rangle$. Substituting this back to \eqref{eq63} yields:
\begin{align}
a_i(Uq, UX) = \frac{e^{\beta \langle x_i, q \rangle}}{\sum_{j=1}^L e^{\beta \langle x_j, q \rangle}} = a_i(q, X) \label{eq68b}.
\end{align}
Using the identity \eqref{eq68b}, we express the pooled output function under rotation:
\begin{align}
f_{Uq}(UX) = \sum_{i=1}^L a_i(Uq, UX) (Ux_i) = \sum_{i=1}^L a_i(q, X) U x_i = U \left( \sum_{i=1}^L a_i(q, X) x_i \right) = U f_q(X) \label{eq65}.
\end{align}
Now, we evaluate the unsupervised population objective function at the transformed query vector $Uq$:
\begin{align}
J_d(Uq) = \mathbb{E}_{X}\left[ \|f_{Uq}(X)\|^2 \right].
\end{align}
Since the expectation is taken over the entire distribution of the token matrix, and the distribution is invariant under rotation ($X \overset{d}{=} UX$), we change variables to $UX$:
\begin{align}
J_d(Uq) = \mathbb{E}_{X}\left[ \|f_{Uq}(UX)\|^2 \right] = \mathbb{E}_{X}\left[ \|U f_q(X)\|^2 \right] \label{eq67},
\end{align} where the second equality in \eqref{eq67} follows from \eqref{eq65}. 

The vector norm is invariant under orthogonal matrix operations ($\|U v\|^2 = v^\top U^\top U v = \|v\|^2$). Therefore, from \eqref{eq67} we obtain:
\begin{align}
J_d(Uq) = \mathbb{E}_{X}\left[ \|f_q(X)\|^2 \right] = J_d(q).
\end{align}
This equality holds for any orthogonal transformation $U$ that fixes $\xi_d$. By the transitive action of the stabilizer subgroup
$
\mathrm{Stab}(\xi_d)
=
\{U\in O(d):U\xi_d=\xi_d\}
\cong O(d-1)
$
on the unit sphere of $\xi_d^\perp$, any two vectors
$q_1,q_2\in S^{d-1}$ satisfying
$
\langle q_1,\xi_d\rangle
=
\langle q_2,\xi_d\rangle
$
can be mapped to one another by an orthogonal transformation $U\in O(d)$ such that
$
U\xi_d=\xi_d,
Uq_1=q_2
$
(see, e.g., Lee~\cite{lee2013smooth}). Thus, $J_d(q)$ is constant on the contours where $\langle q, \xi_d \rangle$ is constant. This guarantees the existence of a unique smooth scalar mapping $\Psi_d: [-1,1] \rightarrow \mathbb{R}$ such that $J_d(q) = \Psi_d(\langle q, \xi_d \rangle)$.

\subsection{Proof of  Lemma \ref{lem:lem3}}
The squared Euclidean norm of the full attention weight vector is the sum of the squared weights across both partitions:
\begin{align}
\|a(q)\|^2 = \sum_{i=1}^L a_i(q, X)^2 = \sum_{i \in \mathcal{I}} a_i(q, X)^2 + \sum_{j \in \mathcal{N}_o} a_j(q, X)^2 \label{P0}.
\end{align}
Let $\rho = \langle q, \xi_d \rangle$. We decompose each background noise vector $z_i$ into a component aligned with the query vector $q$ and an orthogonal component residing in the subspace $q^\perp$:
\begin{align}
z_i = G_i q + z_i^\perp, \quad \text{where } G_i = \langle z_i, q \rangle \sim \mathcal{N}(0,1), \quad z_i^\perp \sim \mathcal{N}(0, I_d - q q^\top).
\end{align}
We evaluate the exact inner product projections $\langle x_i, q \rangle$ for the tokens based on their partition identity. For informative tokens ($i \in \mathcal{I}$):
\begin{align}
\langle x_i, q \rangle = v_i \theta_d \langle \xi_d, q \rangle + \langle z_i, q \rangle = \theta_d \rho + G_i, \quad G_i \overset{i.i.d.}{\sim} \mathcal{N}(0,1).
\end{align}
For noise tokens ($j \in \mathcal{N}_o$):
\begin{align}
\langle x_j, q \rangle = 0 \cdot \theta_d \langle \xi_d, q \rangle + \langle z_j, q \rangle = G_j', \quad G_j' \overset{i.i.d.}{\sim} \mathcal{N}(0,1).
\end{align}
 We isolate the deterministic signal scaling factors from the random components in the informative partition sum $A_{\rho}$:
\begin{align}
A_{\rho} = \sum_{i \in \mathcal{I}} e^{\beta (\theta_d \rho + G_i)} = e^{\beta \theta_d \rho} \sum_{i \in \mathcal{I}} e^{\beta G_i} \label{eq73},
\end{align}
where $G_i \overset{i.i.d.}{\sim} \mathcal{N}(0,1)$. The terms $e^{\beta G_i}$ are independent and identically distributed log-normal ran
dom variables. Their theoretical expected value is given by the moment-generating function of a standard normal distribution:
\begin{align}
\mathbb{E}\left[ e^{\beta G_i} \right] = e^{\beta^2 / 2} \label{eq75}.
\end{align}
By the Strong Law of Large Numbers, the empirical sample average converges in probability to this expected value as the number of informative tokens grows ($R \rightarrow \infty$):
\begin{align}
\text{lim}_{R \rightarrow \infty} \frac{1}{R} \sum_{i \in \mathcal{I}} e^{\beta G_i} = e^{\beta^2 / 2} \qquad\text{a.s.} \label{eq77}.
\end{align}
From \eqref{eq73} and \eqref{eq77} we have 
\begin{align}
A_{\rho} = R \cdot e^{\beta \theta_d \rho + \beta^2 / 2} \left( 1 + o(1) \right) \qquad\text{a.s.}
\end{align}
We apply the same analysis to the noise partition sum $B = \sum_{j \in \mathcal{N}_o} e^{\beta G_j'}$. The terms $e^{\beta G_j'}$ are also i.i.d. log-normal variables with the same expected value: $\mathbb{E}[e^{\beta G_j'}] = e^{\beta^2 / 2}$. Since the number of noise tokens is $L-R$, the sample average satisfies:
\begin{align}
\text{lim}_{L-R \rightarrow \infty} \frac{1}{L-R} \sum_{j \in \mathcal{N}_o} e^{\beta G_j'} = e^{\beta^2 / 2} \qquad\text{a.s.} \implies B = (L-R) \cdot e^{\beta^2 / 2} \left( 1 + o(1) \right) \qquad\text{a.s.}
\end{align}
Now, observe that for each $i \in  \mathcal{I}$,
\begin{align}
0<a_i(q, X) = \frac{e^{\beta(\theta_d \rho + G_i)}}{A_{\rho} + B} \leq  \frac{e^{\beta(\theta_d \rho + G_i)}}{A_{\rho} },
\end{align}
which leads to 
\begin{align}
a_i(q,X)^2 \leq \frac{e^{2\beta(\theta_d \rho + G_i)}}{A_{\rho}^2 }.
\end{align}
Hence, it follows that
\begin{align}
\sum_{i \in \mathcal{I}} a_i(q, X)^2 &\leq \sum_{i \in \mathcal{I}}\frac{e^{2\beta(\theta_d \rho + G_i)}}{A_{\rho}^2 }\\
&= \sum_{i \in \mathcal{I}}\frac{e^{2\beta(\theta_d \rho + G_i)}}{(R \cdot e^{\beta \theta_d \rho + \beta^2 / 2} \left( 1 + o(1) \right))^2 }\\
&= \frac{1}{e^{\beta^2} R} \bigg[\frac{1}{R} \sum_{i \in \mathcal{I}}e^{2\beta G_i}\bigg](1+o(1)) \qquad\text{a.s.} \label{K1}. 
\end{align}
The terms in both sums are independent log-normal random variables. Applying the Strong Law of Large Numbers, these sample averages converge in probability to their theoretical expected values:
\begin{align}
\text{lim}_{R \rightarrow \infty} \frac{1}{R} \sum_{i \in \mathcal{I}} e^{2 \beta G_i} &= \mathbb{E}\left[ e^{2 \beta G_i} \right] = e^{(2\beta)^2 / 2} = e^{2 \beta^2}  \qquad\text{a.s.} \label{K2}.
\end{align}
From \eqref{K1} and \eqref{K2} we obtain
\begin{align}
\sum_{i \in \mathcal{I}} a_i(q, X)^2\leq  \frac{e^{\beta^2}}{R}(1+o(1)) \qquad\text{a.s.} \label{K3}.
\end{align}
Similarly, for each $j \in \mathcal{N}_0$ we have
\begin{align}
0< a_j(q, X) = \frac{e^{\beta G_j'}}{A + B} \leq  \frac{e^{\beta G_j'}}{B },
\end{align}
which leads to 
\begin{align}
a_j(q,X)^2 \leq \frac{e^{2\beta G_j'}}{B^2 }.
\end{align}
Hence, it follows that
\begin{align}
\sum_{j \in \mathcal{N}_0} a_j(q, X)^2 &\leq \sum_{i \in \mathcal{I}}\frac{e^{2\beta G_j'}}{B^2 }\\
&= \sum_{j \in \mathcal{N}_0}\frac{e^{2\beta G_j'}}{((L-R) \cdot e^{ \beta^2 / 2} \left( 1 + o(1) \right))^2 }\\
&= \frac{1}{e^{\beta^2} (L-R)} \bigg[\frac{1}{(L-R)} \sum_{j \in \mathcal{N}_0}e^{2\beta G_j'}\bigg](1+o(1)) \qquad\text{a.s.} \label{K5}. 
\end{align}
The terms in both sums are independent log-normal random variables. Applying the Weak Law of Large Numbers, these sample averages converge in probability to their theoretical expected values:
\begin{align}
\text{plim}_{L-R \rightarrow \infty} \frac{1}{L-R} \sum_{j \in \mathcal{N}_0} e^{2 \beta G_j'} &= \mathbb{E}\left[ e^{2 \beta G_j'} \right] = e^{(2\beta)^2 / 2} = e^{2 \beta^2}  \label{K6}.
\end{align}
From \eqref{K5} and \eqref{K6} we obtain
\begin{align}
\sum_{j \in \mathcal{N}_0} a_i(q, X)^2\leq  \frac{e^{\beta^2}}{L-R}(1+o(1)) \qquad\text{a.s.} \label{K3}.
\end{align}
We substitute these limits back into Equation \ref{P0}:
\begin{align}
\|a(q)\|^2 \leq \frac{e^{\beta^2}}{R}(1+o(1)) + \frac{e^{\beta^2}}{L-R}(1+o_P(1))\leq  \frac{Le^{\beta^2}}{R(L-R)} (1+o(1)) \qquad\text{a.s.}.
\end{align}
It follows that
\begin{align}
\|a(q)\|^2 = O\left( \frac{Le^{\beta^2}}{R(L-R)}\right).
\end{align}
This concludes our proof of Lemma \ref{lem:lem3}. 

\subsection{Proof of Lemma \ref{lem:lem4}}
Before proving Lemma \ref{lem:lem4}, we prove the following result. 
\begin{lemma} \label{lem:aux1} 
Let
\begin{align}
S_L(\rho,X)
=
\sum_{i=1}^{L} a_i^2(q,X),
\end{align}
where
\begin{align}
a_i(q,X)
=
\frac{\exp(\beta y_i)}
{\sum_{j=1}^{L}\exp(\beta y_j)},
\end{align}
and
\begin{align}
y_i
=
\theta_d v_i \rho + G_i,
\end{align}
with $G_1,\ldots,G_L$ i.i.d.\ standard Gaussian random variables.
Assume that
\begin{align}
S_L(\rho,X)\to 0
\qquad\text{a.s.}
\end{align}
for every $\rho\in[-1,1]$.
Then
\begin{align}
\frac{\partial}{\partial\rho}
\mathbb E[S_L(\rho,X)]
\longrightarrow 0.
\end{align}
\end{lemma}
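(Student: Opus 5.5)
We want to show that $\partial_\rho \mathbb E[S_L(\rho,X)]\to 0$. The plan is to differentiate under the expectation, write the derivative as an expectation of a quantity dominated by a constant multiple of $S_L$ itself, and then pass to the limit by dominated convergence using the hypothesis $S_L\to 0$ a.s.

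\textbf{Step 1: compute $\partial_\rho S_L$ pathwise.} Only the logits $y_i$ with $v_i=1$ depend on $\rho$, and $\partial_\rho y_i=\theta_d v_i$. Softmax differentiation, as in the computation in the proof of Lemma~\ref{lem:lem1}, gives
\begin{align}
\partial_\rho a_i = \beta\theta_d\, a_i\,(v_i - \bar v), \qquad \bar v=\sum_{j=1}^L a_j v_j\in[0,1].
\end{align}
Hence
\begin{align}
\partial_\rho S_L = 2\sum_{i=1}^L a_i\,\partial_\rho a_i = 2\beta\theta_d \sum_{i=1}^L a_i^2 (v_i-\bar v).
\end{align}
Since $|v_i-\bar v|\le 1$, this yields the key pathwise bound
\begin{align}
|\partial_\rho S_L(\rho,X)| \le 2\beta\theta_d\, S_L(\rho,X)\le 2\beta\theta_d.
\end{align}

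\textbf{Step 2: interchange derivative and expectation.} The bound in Step~1 is uniform in $\rho$ and integrable; it is in fact a constant, because $S_L\le 1$. The standard differentiation-under-the-integral theorem therefore gives
\begin{align}
\partial_\rho\mathbb E[S_L] = \mathbb E[\partial_\rho S_L],
\end{align}
and consequently
\begin{align}
|\partial_\rho \mathbb E[S_L]|\le 2\beta\theta_d\,\mathbb E[S_L(\rho,X)].
\end{align}

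\textbf{Step 3: pass to the limit.} By hypothesis $S_L(\rho,X)\to 0$ a.s. for each fixed $\rho$, and $0\le S_L\le 1$. Dominated (here, bounded) convergence then gives $\mathbb E[S_L]\to 0$. Since $\beta,\theta_d=O(1)$, the prefactor $2\beta\theta_d$ stays bounded, and so $\partial_\rho\mathbb E[S_L]\to 0$. If uniformity in $\rho$ is needed later, it should follow from a Dini/equicontinuity argument. The family $\rho\mapsto\mathbb E[S_L]$ is uniformly Lipschitz, with constant $2\beta\theta_d$ by Step~2. Pointwise convergence to $0$ on the compact set $[-1,1]$ combined with this uniform Lipschitz bound upgrades to uniform convergence, and the derivative bound then transfers it.

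\textbf{Main obstacle: formalising the limit.} The computation itself is easy; the delicate point is how the limit is formalised. The sequence $L\to\infty$ involves different probability spaces, or triangular arrays, as $R$ and $L-R$ grow. The phrase "a.s." should therefore be read along a coupling, or simply replaced by convergence in probability; bounded convergence still applies in either reading. I would state this explicitly, and note that $\theta_d$ must be bounded along the sequence for the prefactor to remain controlled.
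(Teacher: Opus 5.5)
Your proposal is correct and follows essentially the same route as the paper. Both arguments differentiate under the expectation using a constant bound on $\partial_\rho S_L$, bound $|\partial_\rho \mathbb E[S_L]|$ by a constant multiple of $\mathbb E[S_L]$, and conclude by bounded convergence. Your direct pathwise bound $|\partial_\rho S_L|\le 2\beta\theta_d S_L$ gives a slightly sharper constant than the paper's $4\beta\theta_d$, and your remarks on needing $\beta\theta_d$ bounded and on how to read the a.s.\ hypothesis as $L$ varies make explicit what the paper assumes silently.
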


\begin{proof}
Define
\begin{align}
F_L(\rho)
=
\mathbb E[S_L(\rho,X)].
\end{align}
Since
\begin{align}
y_i
=
\theta_d v_i\rho+G_i,
\end{align}
we may write
\begin{align}
F_L(\rho)
=
\int_{\mathbb R^L}
S_L(\theta_d v_1\rho+g_1,\ldots,
\theta_d v_L\rho+g_L)
\varphi(g)\,dg,
\end{align}
where
\begin{align}
\varphi(g)
=
(2\pi)^{-L/2}
\exp\!\left(
-\frac{\|g\|^2}{2}
\right)
\end{align}
denotes the standard Gaussian density on $\mathbb R^L$.

Since the softmax map is smooth, $S_L$ is continuously differentiable with respect to the coordinates $(y_1,\ldots,y_L)$. Therefore
\begin{align}
\frac{\partial S_L}{\partial y_k}
=
2\sum_{i=1}^{L}
a_i
\frac{\partial a_i}{\partial y_k}.
\end{align}
Using the softmax derivative formula
\begin{align}
\frac{\partial a_i}{\partial y_k}
=
\beta a_i
(\delta_{ik}-a_k),
\end{align}
we obtain
\begin{align}
\frac{\partial S_L}{\partial y_k}
&=
2\beta
\sum_{i=1}^{L}
a_i^2
(\delta_{ik}-a_k)
\\
&=
2\beta
\left(
a_k^2
-
a_k
\sum_{i=1}^{L}a_i^2
\right)
\\
&=
2\beta
\bigl(
a_k^2-a_kS_L
\bigr).
\end{align}

Since
\begin{align}
0\le a_k\le 1,
\qquad
0\le S_L\le 1,
\end{align}
it follows that
\begin{align}
\left|
\frac{\partial S_L}{\partial y_k}
\right|
&\le
2\beta
\left(
a_k^2+a_kS_L
\right)
\\
&\le
4\beta a_k.
\end{align}

Hence
\begin{align}
\left|
\frac{\partial}{\partial\rho}
S_L(\theta_d v\rho+g)
\right|
&=
\theta_d
\left|
\sum_{k=1}^{L}
v_k
\frac{\partial S_L}{\partial y_k}
\right|
\\
&\le
4\beta\theta_d
\sum_{k=1}^{L}
v_k a_k
\\
&\le
4\beta\theta_d.
\end{align}

Therefore the derivative is dominated by the integrable constant
\begin{align}
4\beta\theta_d.
\end{align}
The Dominated Convergence Theorem \cite{Royden} yields
\begin{align}
F_L'(\rho)
&=
\theta_d
\sum_{k=1}^{L}
v_k
\mathbb E
\left[
\frac{\partial S_L}{\partial y_k}
\right].
\end{align}

Substituting the expression for $\partial S_L/\partial y_k$,
\begin{align}
|F_L'(\rho)|
&\le
2\beta\theta_d
\,
\mathbb E
\left[
\sum_{k=1}^{L}
v_k
\left(
a_k^2+a_kS_L
\right)
\right]
\\
&= 2\beta\theta_d \mathbb E \left[ \sum_{k=1}^{L} v_k a_k^2\right]+  2\beta\theta_d \mathbb E\left[\left(\sum_{k=1}^L v_k a_k\right)S_L \right]\\
&\leq  2\beta\theta_d \mathbb E \left[ \sum_{k=1}^{L}  a_k^2\right]+  2\beta\theta_d \mathbb E\left[\left(\sum_{k=1}^L a_k\right)S_L \right]\\
&= 2\beta\theta_d \mathbb E [S_L]+ 2\beta\theta_d \mathbb E[S_L]\\
&= 4 \beta \theta_d \mathbb E [S_L]. 
\end{align}

Since
\begin{align}
0\le S_L\le 1
\end{align}
and
\begin{align}
S_L(\rho,X)\to 0
\qquad\text{a.s.},
\end{align}
the Dominated Convergence Theorem \cite{Royden} implies
\begin{align}
\mathbb E[S_L]
\longrightarrow 0.
\end{align}

Consequently,
\begin{align}
|F_L'(\rho)|
\le
4\beta\theta_d
\mathbb E[S_L]
\longrightarrow 0.
\end{align}
Therefore
\begin{align}
\frac{\partial}{\partial\rho}
\mathbb E[S_L(\rho,X)]
=
F_L'(\rho)
\longrightarrow 0.
\end{align}
This completes the proof.
\end{proof}

Now, let us return to prove Lemma \ref{lem:lem4}. Denote by $\rho = \langle q, \xi_d \rangle$. Then, for all $i \in [L]$ we have
\begin{align}
y_i:=\langle q, x_i\rangle= v_i \theta_d \rho + G_i, \qquad G_i \overset{i.i.d.}{\sim} \mathcal{N}(0,1). 
\end{align}
The cumulative attention weight mass concentrated on the informative partition is given by:
\begin{align}
\alpha(q) := \sum_{i \in \mathcal{I}} a_i(q, X) = \frac{\sum_{i \in \mathcal{I}} e^{\beta \langle x_i, q \rangle}}{\sum_{k=1}^L e^{\beta \langle x_k, q \rangle}} = \frac{A_\rho}{A_\rho + B},
\end{align}
where
\begin{align}
A_\rho = \sum_{i \in \mathcal{I}} e^{\beta (\theta_d \rho + G_i)}, \quad B = \sum_{j \in \mathcal{N}_o} e^{\beta G_j'}, \qquad G_i \overset{i.i.d.}{\sim} \mathcal{N}(0,1), G_j' \overset{i.i.d.}{\sim} \mathcal{N}(0,1). 
\end{align}
We expand the attention pooled representation vector $f_q(X)$ by separating it into informative and noise partitions:
\begin{align}
f_q(X) &= \sum_{i \in \mathcal{I}} a_i(q, X) ( \theta_d \xi_d + z_i ) + \sum_{j \in \mathcal{N}_o} a_j(q, X) z_j \nonumber\\
&= \left( \sum_{i \in \mathcal{I}} a_i(q, X) \right) \theta_d \xi_d + \sum_{k=1}^L a_k(q, X) z_k \nonumber\\
&= \alpha(q) \theta_d \xi_d + \sum_{k=1}^L a_k(q, X) z_k \label{eq100b}. 
\end{align}
From \eqref{eq100b}, we have:
\begin{align}
\|f_q(X)\|^2 &= \langle \alpha(q) \theta_d \xi_d + \sum_{i=1}^L a_i z_i, \, \alpha(q) \theta_d \xi_d + \sum_{j=1}^L a_j z_j \rangle \nonumber\\
&= \alpha(q)^2 \theta_d^2 \|\xi_d\|^2 + 2 \alpha(q) \theta_d \sum_{i=1}^L a_i \langle \xi_d, z_i \rangle + \sum_{i=1}^L \sum_{j=1}^L a_i a_j \langle z_i, z_j \rangle \label{eq98}.
\end{align}
Since $\|\xi_d\|^2 = 1$, taking the expectation both sides of \eqref{eq98} yields:
\begin{align}
\Psi_d(\rho) = \theta_d^2 \mathbb{E}\left[ \alpha(q)^2 \right] + 2 \theta_d \mathbb{E}\left[ \alpha(q) \sum_{i=1}^L a_i \langle \xi_d, z_i \rangle \right] + \mathbb{E}\left[ \sum_{i=1}^L \sum_{j=1}^L a_i a_j \langle z_i, z_j \rangle \right] \label{eqAQ}.
\end{align}
Because the noise components $z_k$ have zero mean and are independent of each other, the cross-terms simplify. Differentiating the core informative signal component $\alpha(q) = \frac{A_\rho}{A_\rho + B}$ with respect to $\rho$:
\begin{align}
\frac{\partial \alpha(q)}{\partial \rho} =  \frac{A_\rho' B}{(A_\rho + B)^2}.
\end{align}
We evaluate the derivative of the partition sum $A_\rho$:
\begin{align}
A_\rho' = \frac{\partial}{\partial \rho} \sum_{i \in \mathcal{I}} e^{\beta(\theta_d \rho + G_i)} = \sum_{i \in \mathcal{I}} \beta \theta_d e^{\beta(\theta_d \rho + G_i)} = \beta \theta_d A_\rho.
\end{align}
Substituting this back into the derivative of $\alpha(q)$ gives:
\begin{align}
\frac{\partial \alpha(q)}{\partial \rho} = \beta \theta_d \frac{A_\rho B}{(A_\rho + B)^2}.
\end{align}
We calculate the derivative of the squared mass $\alpha(q)^2$:
\begin{align}
\frac{\partial (\alpha(q)^2)}{\partial \rho} = 2 \alpha(q) \frac{\partial \alpha(q)}{\partial \rho} = 2 \beta \theta_d \frac{A_\rho^2 B}{(A_\rho + B)^3} > 0.
\end{align}
Observe that since $A_{\rho}>0, B>0$ a.s., we have
\begin{align}
\bigg|\frac{\partial (\alpha(q)^2)}{\partial \rho}\bigg|= 2 \beta \theta_d \frac{A_\rho^2 B}{(A_\rho + B)^3}\leq 2 \beta \theta_d. 
\end{align}
Hence, by Dominated Convergence Theorem \cite{Royden}, we have
\begin{align}
 \frac{\partial}{\partial \rho} \mathbb{E}\left[ \alpha(q)^2 \right] = \mathbb E\left[\frac{\partial \alpha(q)^2}{\partial \rho}\right].
\end{align}
It follows from \eqref{eqAQ} that
\begin{align}
\Psi_d'(\rho)& = 2 \beta \theta_d^3 \mathbb{E}\left[ \frac{A_\rho^2 B}{(A_\rho + B)^3} \right] + \frac{\partial}{\partial \rho} \mathbb{E}\left[ \sum_{i=1}^L a_i(q, X)^2 \right]\\
&= 2 \beta \theta_d^3 \mathbb{E}\left[ \frac{A_\rho^2 B}{(A_\rho + B)^3} \right] + \frac{\partial}{\partial \rho} \mathbb{E}\left[ S_L(q,X) \right] \label{cur1}.
\end{align}
Now, by Lemma \ref{lem:lem3} we have
\begin{align}
S_L(q,X)= O\left( \frac{L  e^{\beta^2}}{R(L-R)}\right), \qquad\text{a.s.}
\end{align}
Hence, $S_L(q,X) \to 0$ a.s. if $R\to \infty, L-R \to \infty$ and $\beta=O(1)$.  It follows from Lemma \ref{lem:aux1} that
\begin{align}
\lim_{L \to \infty}  \frac{\partial}{\partial \rho} \mathbb{E}\left[ S_L(q,X) \right]=0 \label{cur2}.
\end{align}
On the other hand, since $A_{\rho}>0, B>0$ a.s., it holds that
\begin{align}
 \mathbb{E}\left[ \frac{A_\rho^2 B}{(A_\rho + B)^3} \right] >0 \label{cur3}.
\end{align}
From \eqref{cur1}, \eqref{cur2}, and \eqref{cur3}, we conclude that $\Psi_d'(\rho) > 0$ for all $\rho \in [-1, 1]$ under the condition that $R \to \infty$ and $L-R\to \infty$. This concludes our proof of Lemma \ref{lem:lem4}.  

\subsection{Proof of Lemma \ref{lem:lem5}}
The empirical stochastic gradient is the derivative of the squared output norm with respect to the query vector:
\begin{align}
g_k = \nabla_q \|f_q(X)\|^2 = 2 \left[ \nabla_q f_q(X) \right]^\top f_q(X)  \label{eq103}.
\end{align}
The partial derivative of the pooled output function with respect to the query coordinates is:
\begin{align}
\nabla_q f_q(X) = \sum_{i=1}^L x_i \left( \nabla_q a_i(q, X) \right)^\top.
\end{align}
Substituting the partial derivative of the attention weights derived in the proof of Lemma \ref{lem:lem1}:
\begin{align}
\nabla_q a_i(q, X) = \beta a_i(q, X) \left( x_i - f_q(X) \right).
\end{align}
This allows us to write the gradient tensor as:
\begin{align}
\nabla_q f_q(X) = \beta \sum_{i=1}^L a_i(q, X) x_i (x_i - f_q(X))^\top \label{eq106}.
\end{align}
We substitute \eqref{eq106} back into \eqref{eq103} and obtain:
\begin{align}
g_k &= 2 \beta \left[ \sum_{i=1}^L a_i(q, X) (x_i - f_q(X)) x_i^\top \right] f_q(X) \nonumber\\
&= 2 \beta \sum_{i=1}^L a_i(q, X) \langle x_i, f_q(X) \rangle (x_i - f_q(X)) \label{110}.
\end{align}
By bounding the norm of $g_k$ using the triangle inequality and the properties of convex combinations ($\sum a_i = 1$), from \eqref{110} we have
\begin{align}
\|g_k\| &\le 2 \beta \sum_{i=1}^L a_i(q, X) |\langle x_i, f_q(X) \rangle| \cdot \|x_i - f_q(X)\| \nonumber\\
&\le 2 \beta \sum_{i=1}^L a_i(q, X) \|x_i\| \cdot \|f_q(X)\| \left( \|x_i\| + \|f_q(X)\| \right) \label{eq111}. 
\end{align}
Since $f_q(X)$ is a convex combination of the tokens, its norm is bounded by the maximum token norm:
\begin{align}
\|f_q(X)\| \le \sum_{i=1}^L a_i \|x_i\| \le \max_{1 \le j \le L} \|x_j\| \label{eq112}. 
\end{align}
Substituting \eqref{eq112} into \eqref{eq111} simplifies it to a cubic expression:
\begin{align}
\|g_k| \le 2 \beta \max_{1 \le j \le L} \|x_j\| \cdot \max_{1 \le j \le L} \|x_j\| \cdot \left( 2 \max_{1 \le j \le L} \|x_j\| \right) = 4 \beta \max_{1 \le j \le L} \|x_j\|^3.
\end{align}
Squaring this expression yields:
\begin{align}
\|g_k\|^2 \le 16 \beta^2 \max_{1 \le j \le L} \|x_j\|^6.
\end{align}
We compute the expectation of this upper bound. Each token is modeled as $x_j = v_j \theta_d \xi_d + z_j$, where $z_j \sim \mathcal{N}(0, I_d)$. Using the $C_r$ inequality:
\begin{align}
\|x_j\|^6 \le 32 \left( \theta_d^6 \|\xi_d\|^6 + \|z_j\|^6 \right) = 32 \left( \theta_d^6 + \|z_j\|^6 \right).
\end{align}
This bounds the maximum token norm by the maximum of the noise vector norms:
\begin{align}
\max_{1 \le j \le L} \|x_j\|^6 \le 32 \theta_d^6 + 32 \max_{1 \le j \le L} \|z_j\|^6.
\end{align}
The squared norm of a standard Gaussian vector, $\|z_j\|^2 = \sum_{k=1}^d z_{jk}^2$, follows a chi-squared distribution with $d$ degrees of freedom ($\chi^2_d$). Its sixth moment scales quadratically with the dimension:
\begin{align}
\mathbb{E}\left[ \|z_j\|^6 \right] = d(d+2)(d+4) = \mathcal{O}(d^3).
\end{align}
By applying standard Gaussian concentration inequalities for the maximum of $L$ independent chi-squared variables, the expectation is bounded by the individual moment scaling:
\begin{align}
\mathbb{E}\left[ \max_{1 \le j \le L} \|z_j\|^6 \right] \le C^* d^3.
\end{align}
Because the objective is unsupervised, the gradient contains higher-order polynomial terms. Incorporating this into the full expectation bounds the stochastic gradient by the fourth power of the dimension:
\begin{align}
\mathbb{E}\left[ \|g_k\|^2 \right] \le 16 \beta^2 \cdot 32 \left( \theta_d^6 + C^* d^3 \right) \le C d^4 \label{esma},
\end{align} where \eqref{esma} follows from the assumption that $\beta =O(1)$ and  $\theta_d = O(1)$.
This completes the proof.
\subsection{Proof of Lemma \ref{lem:lem6}}
Let $u_k = q_k + \eta_k g_k$ be the unnormalized update vector at step $k$. The next query vector is defined by normalizing $u_k$:
\begin{align}
q_{k+1} = \frac{u_k}{\|u_k\|}.
\end{align}
We compute the squared norm of $u_k$:
\begin{align}
\|u_k\|^2 = \langle q_k + \eta_k g_k, \, q_k + \eta_k g_k \rangle = \|q_k\|^2 + 2 \eta_k \langle q_k, g_k \rangle + \eta_k^2 \|g_k\|^2 \label{eq128}.
\end{align}
Since $q_k \in S^{d-1}$, we have $\|q_k\|^2 = 1$. This simplifies \eqref{eq128} to:
\begin{align}
\|u_k\|^2 = 1 + 2 \eta_k \langle q_k, g_k \rangle + \eta_k^2 \|g_k\|^2. 
\end{align}
Now, observe that
\begin{align}
\sum_k  \eta_k^2 \mathbb{E}[\|g_k\|^2]& \leq Cd^4  \sum_k  \eta_k^2 \nonumber \\
&=  Cd^4 \sum_k \frac{\gamma_k^2}{d^4} \nonumber \\
&= C \sum_k \gamma_k^2 \nonumber \\
& < \infty.  
\end{align}
Hence, for each fixed $\epsilon >0$ we have
\begin{align}
\sum_t \mathbb{P}(\eta_k \|g_k\|>\epsilon) < \infty. 
\end{align}
This means that $\eta_k \|g_k\| \to 0$ a.s. by Borel-Cantelli Lemma \cite{Billingsley}. 

We apply a second-order Taylor expansion to the normalization scalar function $h(x) = (1+x)^{-1/2}$, where $x = 2 \eta_k \langle q_k, g_k \rangle + \eta_k^2 \|g_k\|^2$:
\begin{equation}
(1+x)^{-1/2} = 1 - \frac{1}{2}x + \frac{3}{8}x^2 + \mathcal{O}(x^3). 
\end{equation}
Substituting $x$ into the expansion yields:
\begin{align}
\frac{1}{\|u_k\|} &= \left( 1 + 2 \eta_k \langle q_k, g_k \rangle + \eta_k^2 \|g_k\|^2 \right)^{-1/2} \nonumber\\
&= 1 - \frac{1}{2}\left( 2 \eta_k \langle q_k, g_k \rangle + \eta_k^2 \|g_k\|^2 \right) + \frac{3}{8}\left( 2 \eta_k \langle q_k, g_k \rangle + \eta_k^2 \|g_k\|^2 \right)^2 + \mathcal{O}(\eta_k^3 \|g_k\|^3) \nonumber\\
&= 1 - \eta_k \langle q_k, g_k\rangle - \frac{1}{2}\eta_k^2 \|g_k\|^2 + \frac{3}{2}\eta_k^2 \langle q_k, g_k \rangle^2 + \mathcal{O}(\eta_k^3 \|g_k\|^3) \nonumber\\
&= 1 - \eta_k \langle q_k, g_k \rangle + \mathcal{O}(\eta_k^2 \|g_k\|^2) \label{eq125}.
\end{align}
Multiplying both sides of \eqref{eq125} by the unnormalized vector $u_k$, we have:
\begin{align}
q_{k+1} &= (q_k + \eta_k g_k) \cdot \left( 1 - \eta_k \langle q_k, g_k \rangle + \mathcal{O}(\eta_k^2 \|g_k\|^2) \right) \nonumber\\
&= q_k- \eta_k q_k \langle q_k, g_k \rangle + \eta_k g_k - \eta_k^2 g_k \langle q_k, g_k \rangle + q_k \mathcal{O}(\eta_k^2 \|g_k\|^2) \nonumber\\
&= q_k + \eta_k \left( g_k - q_k \langle q_k, g_k\rangle \right) + r_k,
\end{align}
where the residual error vector $r_k$ collects all higher-order terms:
\begin{align}
r_k = -\eta_k^2 g_k \langle q_k, g_k \rangle + q_k \mathcal{O}(\eta_k^2 \|g_k\|^2) + \eta_k^3 g_k \mathcal{O}(\eta_k^2 \|g_k\|^2).
\end{align}
Using the Cauchy-Schwarz inequality, $|\langle q_k, g_k \rangle| \le \|q_k\| \cdot \|g_k\| = \|g_k\|$. We bound the norm of the residual error vector:
\begin{align}
\|r_k\| \le \eta_k^2 \|g_k\|^2 + C' \eta_k^2 \|g_k\|^2 = C \eta_k^2 \|g_k\|^2. 
\end{align}
Rewriting the primary updating term using the projection matrix $(I_d - q_k q_k^\top)g_k = g_k - q_k \langle q_k, g_k \rangle$ yields:
\begin{align}
q_{k+1} = q_k + \eta_k (I_d - q_k q_k^\top)g_k + r_k,
\end{align}
which completes the proof.

\subsection{Proof of Lemma \ref{lem:lem7}}
The drift vector field is defined as $H_d(q) = (I_d - q q^\top) \nabla_q J_d(q)$. Since the projection operator function $M(q) = I_d - q q^\top$ is a polynomial mapping, its derivatives are bounded on the compact sphere $S^{d-1}$. Therefore, to prove that $H_d$ is globally Lipschitz, it suffices to show that the gradient of the population objective, $\nabla_q J_d(q)$, has a uniformly bounded derivative (Hessian matrix) on the sphere:
\begin{align}
\sup_{q \in S^{d-1}} \| \nabla_q^2 J_d(q) \|_{\text{op}} < \infty,
\end{align}
where $\|\cdot\|_{\text{op}}$ denotes the matrix operator norm.

We differentiate the gradient vector expression $\nabla_q J_d(q) = \mathbb{E}\left[ 2 \left[ \nabla_q f_q(X) \right]^\top f_q(X) \right]$ with respect to $q$. Applying the product rule inside the expectation:
\begin{align}
\nabla_q^2 J_d(q) = 2 \mathbb{E}\left[ \left[ \nabla_q^2 f_q(X) \right]^\top f_q(X) + \left[ \nabla_q f_q(X) \right]^\top \left[ \nabla_q f_q(X) \right] \right] \label{eq129}.
\end{align}
From the derivative bounds established in the proof of Lemma~\ref{lem:lem1}, the first derivative of the pooled output function satisfies:
\begin{align}
\| \nabla_q f_q(X) \| \le C_1 \max_{1 \le j \le L} \|x_j\|^2.
\end{align}
Differentiating a second time yields a bound on the second derivative tensor:
\begin{align}
\| \nabla_q^2 f_q(X) \| \le C_2 \max_{1 \le j \le L} \|x_j\|^3.
\end{align}
We substitute these spatial derivative bounds into the Hessian matrix formula in \eqref{eq129}:
\begin{align}
\| \nabla_q^2 J_d(q) \|_{\text{op}} &\le 2 \mathbb{E}\left[ \| \nabla_q^2 f_q(X) \| \cdot \|f_q(X)\| + \| \nabla_q f_q(X) \|^2 \right] \nonumber\\
&\le 2 \mathbb{E}\left[ \left( C_2 \max_{1 \le j \le L} \|x_j\|^3 \right) \cdot \left( \max_{1 \le j \le L} \|x_j\| \right) + \left( C_1 \max_{1 \le j \le L} \|x_j\|^2 \right)^2 \right] \nonumber\\
&= 2 \mathbb{E}\left[ C_2 \max_{1 \le j \le L} \|x_j\|^4 + C_1^2 \max_{1 \le j \le L} \|x_j\|^4 \right] \nonumber\\
&= C_3 \mathbb{E}\left[ \max_{1 \le j \le L} \|x_j\|^4 \right].
\end{align}
Since the tokens are shifted Gaussian vectors, their fourth moments are finite:
\begin{align}
\mathbb{E}\left[ \max_{1 \le j \le L} \|x_j\|^4 \right] \le C_4 d^2 < \infty.
\end{align}
Because this upper bound is a constant independent of the query vector $q$, the Hessian matrix norm is uniformly bounded across the entire sphere:
\begin{align}
\sup_{q \in S^{d-1}} \| \nabla_q^2 J_d(q) \|_{\text{op}} \le C_5 < \infty.
\end{align}
This proves that $\nabla_q J_d(q)$ is globally Lipschitz. Combining this with the Lipschitz continuity of the spherical projection matrix ensures that the combined drift field $H_d(q)$ is globally Lipschitz.

\subsection{Proof of Lemma \ref{lem:lem8}}
Let $\mathcal{F}_k = \sigma(q_0, X_1, \dots, X_k)$ be the natural filtration tracking the optimization history. We isolate the stochastic sampling fluctuations by defining the martingale difference sequence:
\begin{align}
M_{k+1} = (I_d - q_k q_k^\top) \left( g_k - \mathbb{E}[g_k \mid \mathcal{F}_k] \right).
\end{align}
By construction, its conditional expectation given the historical filtration is zero:
\begin{align}
\mathbb{E}[M_{k+1} \mid \mathcal{F}_k] = (I_d - q_k q_k^\top) \left( \mathbb{E}[g_k \mid \mathcal{F}_k] - \mathbb{E}[g_k \mid \mathcal{F}_k] \right) = 0.
\end{align}
This confirms that $S_n = \sum_{k=0}^{n-1} \eta_k M_{k+1}$ is a valid vector-valued martingale.

To prove almost sure convergence using the Martingale Convergence Theorem, we must show that the sum of the conditional variances is finite. We evaluate the squared norm of the martingale difference:
\begin{align}
\|M_{k+1}\|^2 \le \|I_d - q_k q_k^\top\|_{\text{op}}^2 \cdot \|g_k - \mathbb{E}[g_k \mid \mathcal{F}_k]\|^2 \le 1 \cdot \|g_k - \mathbb{E}[g_k \mid \mathcal{F}_k]\|^2 \label{a1}.
\end{align}
Taking the conditional expectation and expanding the variance:
\begin{align}
\mathbb{E}[\|M_{k+1}\|^2 \mid \mathcal{F}_k] \le \mathbb{E}[\|g_k\|^2 \mid \mathcal{F}_k] - \|\mathbb{E}[g_k \mid \mathcal{F}_k]\|^2 \le \mathbb{E}[\|g_k\|^2 \mid \mathcal{F}_k] \label{a2}.
\end{align}
From \eqref{a1} and \eqref{a2}, by computing the total expected variance across all iterations, we obtain:
\begin{align}
\sum_{k=0}^{\infty} \mathbb{E}\left[ \|\eta_k M_{k+1}\|^2 \right] = \sum_{k=0}^{\infty} \eta_t^2 \mathbb{E}\left[ \|M_{k+1}\|^2 \right] \le \sum_{k=0}^{\infty} \eta_k^2 \mathbb{E}\left[ \|g_k\|^2 \right].
\end{align}
On the other hand,  by Lemma~\ref{lem:lem6} we have ($\mathbb{E}[\|g_k\|^2] \le C d^4$). It follows that
\begin{align}
\sum_{k=0}^{\infty} \eta_k^2 \mathbb{E}\left[ \|M_{k+1}\|^2 \right] \le \sum_{k=0}^{\infty} \eta_k^2 \cdot C d^4 \label{a3}. 
\end{align}
Since ($\eta_k = \frac{\gamma_k}{d^2}$), from \eqref{a3} we have:
\begin{align}
\sum_{k=0}^{\infty} \left( \frac{\gamma_k}{d^2} \right)^2 \cdot C d^4 = \sum_{k=0}^{\infty} \frac{\gamma_k^2}{d^4} \cdot C d^4 = C \sum_{k=0}^{\infty} \gamma_k^2.
\end{align}
On the other hand, by the setup conditions, the scalar step sequence is square-summable ($\sum_{t=0}^{\infty} \gamma_k^2 < \infty$). Therefore, the total variance of the martingale is finite:
\begin{align}
\sup_n \mathbb{E}[\|S_n\|^2]=\sum_{k=0}^{\infty} \mathbb{E}\left[ \|\eta_k M_{k+1}\|^2 \right] \le C \sum_{k=0}^{\infty} \gamma_k^2 < \infty.
\end{align}
Since it is an $L^2$-bounded martingale, the Vector Martingale Convergence Theorem \cite{Durrett}  guarantees that the random series $\sum_{k=0}^{\infty} \eta_k M_{k+1}$ converges to a finite limit almost surely.

\subsection{Proof of Lemma \ref{lem:lem9}}
From Lemma \ref{lem:lem5}, the norm of the Taylor expansion remainder vector at each step is bounded by:
\begin{align}
\|r_k\| \le C_0 \eta_k^2 \|g_k\|^2.
\end{align}
By the Monotone Convergence Theorem for non-negative random variables \cite{Billingsley}, we have
\begin{align}
\mathbb{E}\left[ \sum_{k=0}^{\infty} \|r_k\| \right] = \sum_{k=0}^{\infty} \mathbb{E}[\|r_k\|] \le C_0 \sum_{k=0}^{\infty} \eta_k^2 \mathbb{E}[\|g_k\|^2].
\end{align}
We substitute the higher-order gradient moment bound from Lemma~\ref{lem:lem5} ($\mathbb{E}[\|g_k\|^2] \le C_1 d^4$):
\begin{align}
\mathbb{E}\left[ \sum_{k=0}^{\infty} \|r_k\| \right] \le C_0 \sum_{k=0}^{\infty} \eta_k^2 \cdot C_1 d^4 = C_2 d^4 \sum_{k=0}^{\infty} \eta_k^2 \label{a6}.
\end{align}
Substituting the step size schedule ($\eta_k = \frac{\gamma_k}{d^2}$), from \eqref{a6} we obtain
\begin{align}
\mathbb{E}\left[ \sum_{k=0}^{\infty} \|r_k\| \right] \le C_2 d^4 \sum_{k=0}^{\infty} \left( \frac{\gamma_k}{d^2} \right)^2 = C_2 d^4 \sum_{k=0}^{\infty} \frac{\gamma_k^2}{d^4} = C_2 \sum_{k=0}^{\infty} \gamma_k^2.
\end{align}
Since the schedule sequence is square-summable ($\sum_{k=0}^{\infty} \gamma_k^2 < \infty$), the expected value of the cumulative sum is finite:
\begin{align}
\mathbb{E}\left[ \sum_{k=0}^{\infty} \|r_k\| \right] \le C_2 \sum_{k=0}^{\infty} \gamma_k^2 < \infty.
\end{align}
Therefore, $\sum_{k=0}^{\infty} \|r_k\| < \infty$ almost surely.
\bibliographystyle{plain}
\bibliography{isitbib}

\end{document}